\def\eqref#1{equation~\ref{#1}}
\def\1{\bm{1}}
\DeclareMathAlphabet{\mathsfit}{\encodingdefault}{\sfdefault}{m}{sl}
\SetMathAlphabet{\mathsfit}{bold}{\encodingdefault}{\sfdefault}{bx}{n}
\def\gE{{\mathcal{E}}}
\def\gG{{\mathcal{G}}}
\def\gH{{\mathcal{H}}}
\def\gI{{\mathcal{I}}}
\def\gL{{\mathcal{L}}}
\def\gP{{\mathcal{P}}}
\def\gV{{\mathcal{V}}}
\def\gZ{{\mathcal{Z}}}
\def\sP{{\mathbb{P}}}
\newcommand{\E}{\mathbb{E}}
\newcommand{\R}{\mathbb{R}}
\newtheorem{theorem}{Theorem}
\newtheorem{prop}[theorem]{Proposition}
\newtheorem{lemma}[theorem]{Lemma}
\newcommand{\abs}[1]{\left|#1\right|}
\newcommand{\midd}{\,|\,}
\begin{document}

\runningtitle{Efficient Inference for Coupled Hidden Markov Models in Continuous Time and Discrete Space}

\twocolumn[

\aistatstitle{Efficient Inference for Coupled Hidden Markov Models \\ in Continuous Time and Discrete Space}

\aistatsauthor{ Giosue Migliorini \And Padhraic Smyth }

\aistatsaddress{ Department of Statistics \\ University of California, Irvine \And Department of Computer Science \\ University of California, Irvine } ]

\begin{abstract}
  Systems of interacting continuous-time Markov chains are a powerful model class, but inference is typically intractable in high-dimensional settings. Auxiliary information, such as noisy observations, is typically only available at discrete times, and incorporating it via a Doob's $h$-transform gives rise to an intractable posterior process that requires approximation. We introduce Latent Interacting Particle Systems, a model class parameterizing the generator of each Markov chain in the system. 
Our inference method involves estimating look-ahead functions (twist potentials) that anticipate future information, for which we introduce an efficient parameterization. We incorporate this approximation in a twisted Sequential Monte Carlo sampling scheme. We demonstrate the effectiveness of our approach on a challenging posterior inference task for a latent SIRS model on a graph, and on a neural model for wildfire spread dynamics trained on real data.
\end{abstract}

\section{INTRODUCTION}
Many real-world phenomena, from epidemics to wildfires, can be modeled as systems of interacting components evolving in continuous time, where the underlying dynamics are governed by discrete latent states \citep{lanchier2024stochastic}. This modeling approach builds upon concepts from continuous-time hidden Markov models \citep{baum1966statistical, kouemou2011history} and extends them to spatially-structured, high-dimensional processes. Interacting particle systems (IPSs) \citep{liggett1985interacting, lanchier2024stochastic} provide a powerful mathematical framework for describing local propagation dynamics in discrete state spaces and continuous time, and are an important subset of the broader class of continuous-time Markov chains (CTMCs).
We formulate our goal as performing probabilistic inference on systems whose latent dynamics follow an IPS, given only incomplete or indirect information. 

We aim to endow the IPS with a flexible parameterization (e.g., a neural network), yielding a discrete analog of latent/neural stochastic differential equations \citep{movellan2002monte,tzen2019neural,liu2020does,li2020scalable,bartosh2025sde}. While neural-based inference methods for continuous-time discrete-state processes have been explored \citep{seifner2023neural,berghaus2024neurips}, they have primarily demonstrated efficacy in low-dimensional settings.

\begin{figure*}[ht]
    \centering
\includegraphics[width=.88\linewidth]{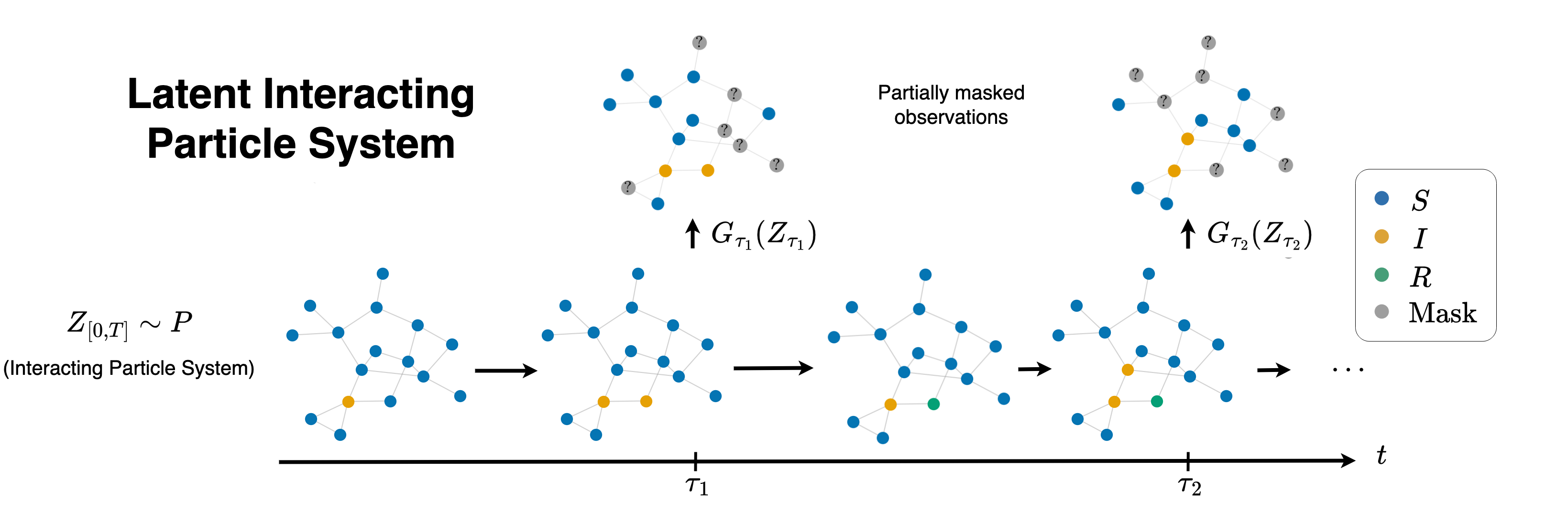}
    \caption{Example of a latent IPS as a state space model. Here, the latent trajectory is sampled from a continuous-time SIR model, and potentials are emission distributions of partially masked observations.}
    \label{fig:illustration}
\end{figure*}
The core technical challenge for inference with latent IPSs lies in sampling from a smoothed path measure over latent trajectories. To address this problem, we make the following contributions:
\begin{itemize}
    \item We propose a twisted sequential Monte Carlo (tSMC) scheme \citep{guarniero2017iterated, heng2020controlled} tailored to latent IPSs. We learn a twist function that can approximate the likelihood of future observations, and directly incorporate it into the rate matrix of an approximate posterior process, lifting the need to learn a separate proposal distribution \citep{lawson2018twisted,lawson2022sixo,lawson2023x}.
    \item We learn the twist function using a mass-covering Kullback-Leibler divergence loss in an amortized fashion \citep{zhao2024probabilistic}. Moreover, we design an efficient parameterization with favorable inductive biases. 
    \item We demonstrate this on (i) a spatial susceptible--infected--recovered--susceptible (SIRS) model on graphs with up to 256 nodes, and (ii) a neural wildfire-spread model on $64{\times}64$ grids using the WildFireSpreadTS dataset \citep{gerard2023wildfirespreadts}.
\end{itemize}

\section{BACKGROUND}

\subsection{Continuous-time Markov chains}
Consider a Markov process $Z_{[0,T]}\coloneq\{Z_t\}_{t\in[0,T]}$ taking values in a discrete state space $\gZ=\{1,\dots,V\}$ over a finite time horizon $[0,T]$. 
This system, known as a CTMC, is characterized by an initial distribution $p_0\in\gP(\gZ)$ and a measurable family of rate matrices 
$[R_t(z,\tilde z)]_{z,\tilde z\in \gZ}, \,t\in [0,T]$, where each entry satisfies
\begin{equation*}\label{eq:rates}
    R_t(z,\tilde z) \coloneq \lim_{\Delta t \to 0} \frac{1}{\Delta t}\,\sP(Z_{t+\Delta t}=\tilde z \mid Z_t=z),
\end{equation*}
for $\tilde z\neq z$, and $R_t(z,z) \coloneq -\sum_{\tilde z\neq z}R_t(z,\tilde z)$ on the diagonal.
Assuming $\sup_{t,\,z} -R_t(z,z)<\infty$ (non-explosion), sample paths are c\`adl\`ag, piecewise-constant, with finitely many jumps on $[0,T]$.

We refer to the induced distribution on the Skorokhod space $D([0,T],\gZ)$ as the \emph{path measure}. 
Writing $0<t_1<\cdots<t_N < T$ for the jump times and $z_{t}$ for the state at a time $t\in[0,T]$, the density of this measure with respect to counting measure on states and Lebesgue measure on jump times is
\begin{equation*}
    p_0(z_0)\;
    \Bigg[\prod_{n=1}^N R_{t_n}(z_{t_{n-1}},z_{t_n})\Bigg]\,
    \exp\!\left(\int_0^T R_t(z_t,z_t)\,dt\right).
\end{equation*}
A detailed introduction can be found in \cite{norris,del2017stochastic}. We provide an overview of inference with CTMCs in Appendix \ref{appdx:background}.

\subsection{Interacting particle systems}
While CTMCs can be extended to high-dimensional systems, the size of the associated rate matrix increases exponentially with the number of dimensions, making exact inference intractable. 
We restrict attention to state spaces of the form $\gZ = \gV^d$, where $\gV = \{1,\dots,V\}$ is a fixed \textit{vocabulary}. 
We assume that transitions affect only a single coordinate (i.e., dimension) at a time, and that the rate for updating a coordinate $i$ may depend on the current global state $z\in\gZ$. 
The dependence of the rate on $z$ is often specified through a graph $\gG=(\gI,\gE)$, where $\gI=\{1,\dots,d\}$ indexes coordinates and $\gE$ encodes neighborhood structure (e.g., spatial dependence) \citep{liggett1985interacting}. 

To describe local updates, we use the shorthand 
\[
z^{i\to v} \;=\; (z^1,\dots,z^{i-1},v,z^{i+1},\dots,z^d)
\]
for the configuration obtained from $z$ by replacing the $i$-th coordinate with $v\in\gV$. 
The local rate $r_{i,t}(v\mid z)\ge 0$ is defined as the instantaneous intensity of transitioning from $z$ to $z^{i\to v}$, i.e.
\[
R_t(z,z^{i\to v}) \;=\; r_{i,t}(v\mid z), \qquad v\neq z^i.
\]
Under these assumptions, the global generator $R_t$ decomposes as a sum of local generators:
\begin{equation}
    R_t(z,\tilde z) = \sum_{i\in \gI} r_{i,t}(\tilde z^i \midd z) \prod_{j\neq i}\delta_{z^j}(\tilde z^j),
\end{equation}
reflecting the assumption that only one coordinate may change at a time. Typically the local rates $r_{i,t}(v\mid z)$ depend only on the neighborhood $(z^j)_{j\in\mathcal N(i)}$ of $i$ in the graph $\gG$. CTMCs with generators of this form are known as \emph{interacting particle systems} (IPS) on a finite state space and finite graph \citep{liggett1985interacting, lanchier2017stochastic, lanchier2024stochastic}. We are interested in system that may exhibit time dependence, i.e. non-homogeneous processes \citep{norris}.

\textbf{Example \textit{(SIR model)}:} In an SIR model on a network of $d$ nodes, the state space is $\{S,I,R\}^d$ and the rate of infection of each node can be a function of the states of its neighbors. For an illustration, see the process $Z_{[0,T]}$ in Figure \ref{fig:illustration}.

\paragraph{Simulation.}\label{sec:simulation} Exact simulation from IPSs is feasible for models like the contact process and the voting process \citep{lanchier2017stochastic}, leveraging a construction from independent Poisson processes \citep{lanchier2017stochastic}. However, simulation can be challenging for non-homogeneous processes. For this reason, in this work we employ a simple first-order Euler discretization \citep{sun2023score}. We assume conditional independence among coordinates in intervals of width $\Delta_t$, and let the transition probability be:
\begin{equation}\label{eq:prod-kernel}
 q_{t,\Delta_t}(\tilde z\mid z)
\;:=\;
\prod_{i\in\gI}\!\left(
\delta_{\tilde z^i,z^i}
\;+\;
\Delta_t  r_{i,t}(\tilde z^i\mid z)
\right),
\end{equation}
where $r_{i,t}(z^i\mid z):=-\sum_{v\neq z^i} r_{i,t}(v\mid z)$.
Note that $ q_{t,\Delta_t}$ may assign mass to multi-site flips for any finite step size. Despite its simplicity, this simulation scheme has shown good performance in generative modeling applications, such as simulating the reverse process of discrete diffusion and discrete flow matching models \citep{campbellgenerative, gat2024discrete}. 

In the following Proposition, we show that the error of this kernel can be bounded in total variation. Let $
\lambda_t(z)\;:=\;\sum_{i\in\gI}\sum_{v\neq z^i} r_{i,t}(v\mid z)$.
We assume that the following properties hold for each $t\in[0,T]$:

\begin{enumerate}[label=(A\arabic*),leftmargin=*, align=left]
\item \textbf{Bounded total rate:} $\sup_{t,z}\lambda_t(z)\le \bar\lambda<\infty$. 
\item \textbf{Small time interval:} we bound the step size as $\Delta_t \le 1/\sup_{t,z,i}\sum_{v\neq z^i} r_{i,t}(v\midd z)$.
\item \textbf{Local Lipschitz in time:} for all $u\in[0,\Delta_t]$,
$\big|r_{i,t+u}(v\mid z)-r_{i,t}(v\mid z)\big|\le L\,u$. 
\end{enumerate}
Since $\gZ$ is finite, we identify the transition kernel of an IPS $P_{t,\Delta_t}(z_t,dz)\coloneq \mathbb{P}(Z_{t+\Delta_t}\in d z\mid Z_t=z_t)$ with its pmf
$p_{t,\Delta_t}(\tilde z\mid z)\coloneq P_{t,\Delta_t}(z,\{\tilde z\})$.
\begin{prop}\label{prop:tv-compact}
Under \emph{(A1)–(A3)}, there exists $C<\infty$ such that
\begin{equation*}
    \big\|q_{t, \Delta_t}(\,\cdot\midd z) - p_{t,\Delta_t}(\,\cdot\midd z)\big\|_{\mathrm{TV}}
\;\le\; C\,\Delta_t^2,
\end{equation*}
uniformly in $t\in[0,T]$ and $z\in\gZ$.
\end{prop}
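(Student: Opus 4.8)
The plan is to interpose the one-step Euler update for the \emph{full} generator,
\[
\pi_{t,\Delta_t}(\tilde z\mid z)\;:=\;\delta_{\tilde z,z}+\Delta_t\,R_t(z,\tilde z)
\]
(a signed measure of total mass $1$, since $R_t$ has zero row sums; its possible negativity is harmless), and to bound both legs of
\[
\big\|q_{t,\Delta_t}(\cdot\mid z)-p_{t,\Delta_t}(\cdot\mid z)\big\|_{\mathrm{TV}}
\;\le\;\big\|q_{t,\Delta_t}(\cdot\mid z)-\pi_{t,\Delta_t}(\cdot\mid z)\big\|_{\mathrm{TV}}
+\big\|\pi_{t,\Delta_t}(\cdot\mid z)-p_{t,\Delta_t}(\cdot\mid z)\big\|_{\mathrm{TV}}
\]
by $O(\Delta_t^2)$ uniformly in $(t,z)$. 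Throughout I would work with the row-sum norm $\|K\|:=\sup_z\sum_{\tilde z}|K(z,\tilde z)|$, for which $\|K(\cdot\mid z)\|_{\mathrm{TV}}\le\tfrac12\|K\|$ for every $z$, $\|P\|=1$ for any stochastic matrix, and $\|R_t\|=2\sup_z\lambda_t(z)\le2\bar\lambda$ by (A1).

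\emph{Time-inhomogeneity leg.} Since $\gZ=\gV^d$ is finite and (A1) holds, the process is non-explosive and its transition kernels $P(s)$ from time $t$ to $t+s$ solve the Kolmogorov forward equation $P'(s)=P(s)R_{t+s}$ with $P(0)=\mathrm{Id}$, hence $P(\Delta_t)=\mathrm{Id}+\int_0^{\Delta_t}P(s)R_{t+s}\,ds$. Subtracting $\pi_{t,\Delta_t}=\mathrm{Id}+\Delta_t R_t$ and writing $P(s)R_{t+s}-R_t=(P(s)-\mathrm{Id})R_{t+s}+(R_{t+s}-R_t)$, I would use $\|P(s)-\mathrm{Id}\|\le2\bar\lambda\,s$ (from the same integral identity, since $\|P(u)\|=1$) and $\|R_{t+s}-R_t\|\le L'\,s$ with $L':=2d(V-1)L$, because by (A3) each of the at most $d(V-1)$ off-diagonal entries of a row changes by at most $Ls$ and the diagonal entry by the matching amount. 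Integrating gives $\|P(\Delta_t)-\pi_{t,\Delta_t}\|\le\tfrac12(4\bar\lambda^2+L')\Delta_t^2$, hence $\|\pi_{t,\Delta_t}(\cdot\mid z)-p_{t,\Delta_t}(\cdot\mid z)\|_{\mathrm{TV}}\le C_1\Delta_t^2$.

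\emph{Factorization leg.} Set $a_i:=\Delta_t\sum_{v\neq z^i}r_{i,t}(v\mid z)\ge0$, so $\sum_i a_i=\Delta_t\lambda_t(z)\le\bar\lambda\Delta_t$, and note that (A2) forces each factor of \eqref{eq:prod-kernel} into $[0,1]$ (it equals $1-a_i$ on an unchanged coordinate and $\Delta_t r_{i,t}(\tilde z^i\mid z)$ on a changed one, so $q_{t,\Delta_t}$ is genuinely a product of probability measures). I would then expand the product and compare with $\pi_{t,\Delta_t}$ grouped by the number of changed coordinates: on $\tilde z=z$ the gap is $\prod_i(1-a_i)-(1-\sum_i a_i)\in[0,\tfrac12(\sum_i a_i)^2]$; on a single-site flip $\tilde z=z^{i\to v}$ it equals $\Delta_t r_{i,t}(v\mid z)\big(1-\prod_{j\neq i}(1-a_j)\big)$, which summed over $i$ and $v\neq z^i$ is at most $(\sum_i a_i)^2\le\bar\lambda^2\Delta_t^2$; and on the $\ge 2$-site flips $\pi_{t,\Delta_t}$ vanishes while the total $q$-mass carried there is at most $\prod_i(1+a_i)-1-\sum_i a_i\le\tfrac12(\sum_i a_i)^2e^{\sum_i a_i}$. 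Since $\Delta_t$ is bounded (by (A2), or simply $\Delta_t\le T$), these three contributions give $\|q_{t,\Delta_t}(\cdot\mid z)-\pi_{t,\Delta_t}(\cdot\mid z)\|_{\mathrm{TV}}\le C_2\Delta_t^2$; the triangle inequality then yields the claim with $C=C_1+C_2$.

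The bookkeeping in the first leg is a routine Duhamel/variation-of-constants estimate. The step I expect to be the real obstacle is the $\ge 2$-site contribution in the second leg: there are on the order of $V^d$ such states, yet their aggregate $q$-mass must still be shown to be $O(\Delta_t^2)$. The clean device is the elementary-symmetric-polynomial identity $\sum_{k\ge0}e_k(a_1,\dots,a_d)=\prod_i(1+a_i)$ together with $\prod_i(1+a_i)\le e^{\sum_i a_i}$ and $e^x-1-x\le\tfrac12x^2e^x$, which collapses the combinatorial sum into the scalar bound above; the same identity also controls the $\tilde z=z$ gap, since $\prod_i(1-a_i)-(1-\sum_i a_i)=\sum_{k\ge2}(-1)^ke_k(a)$ is bounded in absolute value by $\sum_{k\ge2}e_k(a)$. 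One should also note that $C$ depends on $d,V,\bar\lambda,L$ and the step-size bound, which is consistent with the proposition asserting only uniformity in $t$ and $z$.
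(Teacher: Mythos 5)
Your proof is correct, but it is organized differently from the paper's. The paper compares $p_{t,\Delta_t}$ and $q_{t,\Delta_t}$ directly on the partition $S_0$ (no change), $S_1$ (single flips), $S_{\ge 2}$ (multi flips): it bounds the multi-jump mass of the true kernel by Poisson domination with rate $\bar\lambda$ (A1), uses (A3) to replace $\int_t^{t+\Delta_t} r_{i,u}\,du$ by $\Delta_t\,r_{i,t}$ up to $O(\Delta_t^2)$, and then matches the resulting first-order expansions of the two kernels class by class. You instead make the common linearization explicit by interposing the signed kernel $\pi_{t,\Delta_t}=\mathrm{Id}+\Delta_t R_t$ and splitting via the triangle inequality: the leg $\|p-\pi\|$ is handled by a Duhamel/variation-of-constants estimate in the row-sum operator norm (which simultaneously absorbs the multi-jump mass of $p$, so no Poisson-domination argument is needed), and the leg $\|q-\pi\|$ is a purely algebraic comparison of the product kernel to its linearization via elementary symmetric polynomials and the inequalities $\prod_i(1\pm a_i)\lessgtr$ their first-order truncations, with (A2) used exactly where the paper implicitly needs it, namely to make each factor of $q$ a genuine probability on $\gV$. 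The net effect is the same bound with the same dependence of $C$ on $d$, $V$, $\bar\lambda$, $L$ (and the step-size/horizon bound); what your route buys is explicit constants and a cleaner justification of the "uniform $O(\Delta_t^2)$ remainder" step that the paper asserts for the true kernel, while the paper's route is shorter and more probabilistic in flavor, reading the second-order terms directly off the jump structure of the IPS.
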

A proof is provided in Appendix~\ref{proof:tv-compact}.

\subsection{Sequential Monte Carlo}
In our approximate posterior inference scheme for IPSs, we employ sequential Monte Carlo (SMC) methods. SMC approximates a terminal target $\pi_T$ by evolving a population of weighted samples through a sequence of intermediate targets $\{\pi_t\}_{t\in[0,T]}$ \citep{doucet2009tutorial,naesseth2019elements,chopin_introduction_2020}. We assume $\pi_0$ is easy to sample from, and that we can evaluate unnormalized densities $\{\gamma_t\}_{t\in[0,T]}$ with $\pi_t \propto \gamma_t$. In our setting, it is natural to let $\pi_t$ be a distribution on path prefixes $Z_{[0,t]}\in D([0,t],\gZ)$. Given particles $\{Z^{(s)}_{[0,t-\Delta_t]},\,w^{(s)}_{t-\Delta_t}\}_{s=1}^S$ targeting $\pi_{t-\Delta_t}$ and a proposal kernel
$q_t(\cdot\mid Z^{(s)}_{t-\Delta_t})$ that advances each trajectory by $\Delta_t$, we draw a segment
$Z^{(s)}_{(t-\Delta_t,t]}\sim q_t(\cdot\mid Z^{(s)}_{t-\Delta_t})$ and update unnormalized incremental weights as
\begin{gather}
\tilde w_t^{(s)}
\;=\;
\frac{\gamma_t\!\big(dZ^{(s)}_{[0,t]}\big)}
{\gamma_{t-\Delta_t}\!\big(dZ^{(s)}_{[0,t-\Delta_t]}\big)\,
q_t\!\big(Z^{(s)}_{(t-\Delta_t,t]}\mid Z^{(s)}_{t-\Delta_t}\big)}\,, \nonumber
\\
w_t^{(s)} \propto w^{(s)}_{t-\Delta_t}\,\tilde w_t^{(s)},
\quad
\bar w_t^{(s)}=\frac{w_t^{(s)}}{\sum_{j=1}^S w_t^{(j)}}. \label{eq:smc_weights}
\end{gather}

A distinctive feature of SMC is resampling, where samples are selected and propagated based on their importance weights. This is implemented by drawing ancestry indices according to $\bar w_t^{(1:S)}$, adaptively when the effective sample size $\mathrm{ESS}_t = \big(\sum_{s=1}^S (\bar w_t^{(s)})^2\big)^{-1}$ falls below a threshold \citep{naesseth2019elements}. Two design choices are central to performance: \emph{(i) intermediate targets} $\{\pi_t\}$, which can be crafted to reduce variance of incremental weights; and \emph{(ii) the proposal distribution} $q_t$, which should ensure proposed segments land in high-probability regions of $\pi_t$. Poor choices of either can trigger rapid weight collapse, especially in high-dimensional, sparse-observation regimes \citep{naesseth2019elements,chopin2023computational}.

\section{EFFICIENT INFERENCE FOR LATENT INTERACTING PARTICLE SYSTEMS}

\subsection{Posterior path measure for an IPS}

Let $Z_{[0,T]}$ be an Interacting Particle System (IPS) with path measure $P$, initial distribution $p_0$, and local transition rates $r_{i,t}(v \midd z)$. 
We are interested in conditioning this process on auxiliary information available at discrete times $\tau_1,\dots,\tau_K \in [0,T]$. 
We represent this information by nonnegative \emph{potential functions} $G_{\tau_k}: \mathcal{Z}\to\mathbb{R}_+$ that can be evaluated pointwise. 
The resulting \emph{posterior path measure} is
\begin{equation}\label{eq:posterior_path_measure}
    P^\star(dZ_{[0,T]}) \;\propto\; 
    \Bigg(\prod_{k=1}^K G_{\tau_k}(Z_{\tau_k})\Bigg) P(dZ_{[0,T]}),
\end{equation}
a special case of a Feynman–Kac (FK) path measure \citep{chopin_introduction_2020, lu_guidance_2024, park2025amortized}. A key property is that the process $Z^\star_{[0,T]}$ governed by $P^\star$ is itself an IPS. 

This follows from Doob’s $h$-transform applied to the original dynamics \citep{del2017stochastic,corstanje2023conditioning,corstanje2025guided}. 
Define the \emph{look-ahead function}
\begin{equation}\label{eq:lookahead}
h^\star_t(z) \coloneq \mathbb{E}_P\Big[\prod_{k:\tau_k>t} G_{\tau_k}(Z_{\tau_k}) \,\big|\, Z_t=z\Big],
\qquad h^\star_T(z)=1,
\end{equation}
which is right-continuous with left limits (càdlàg) with reset conditions at potential times \citep{eichentropic}:
\begin{equation}\label{eq:reset}
h^\star_{\tau_k^-}(z) \;=\; G_{\tau_k}(z)\,h^\star_{\tau_k}(z).
\end{equation}
Note that we can marginalize \eqref{eq:posterior_path_measure} to any time $t\in [0,T]$, and get
\begin{align}
    P^\star(dZ_{[0,t]}) \propto P(dZ_{[0,t]}) h^\star_{t}(Z_t) \prod_{k:\tau_k \leq t}G_{\tau_k}(Z_{\tau_k}).
\end{align}
The initial distribution and local rates of the posterior IPS are characterized in the following Proposition.
\begin{prop}[Doob’s $h$-transform for IPS]\label{prop:doob_h_transform}
Under $P^\star$, the process $\{Z^\star_t\}$ is an IPS with initial distribution
\[
p^\star_0(z) \;\propto\; p_0(z)\,h^\star_0(z),
\]
and off-diagonal local transition rates
\begin{equation}\label{eq:doob_rates}
r^{\star}_{i,t}(v \mid z) \;=\; r_{i,t}(v \mid z)\,s^{\star}_{i,t}(v,z)
,
\qquad v\neq z^i,
\end{equation}
where $ s^{\star}_{i,t}(v,z) \coloneq {h^\star_t(z^{i\to v})}/{h^\star_t(z)}$.
Diagonal terms are defined as usual by
$r^{\star}_{i,t}(z^i\mid z) = -\sum_{v\neq z^i} r^{\star}_{i,t}(v\mid z)$.
\end{prop}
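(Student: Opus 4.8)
The plan is to prove the two claims — the initial law and the local rates — by a direct computation against the explicit path density displayed in Section~2.1, since sample paths are piecewise constant with finitely many jumps. (An equally clean alternative, which I would mention, is the abstract route: verify that $M_t \coloneq h^\star_t(Z_t)\prod_{k:\tau_k\le t}G_{\tau_k}(Z_{\tau_k})$ is a nonnegative $P$-martingale, set $\tfrac{dP^\star}{dP}\big|_{\mathcal F_T}=M_T/\mathbb{E}_P[M_T]$, and invoke the martingale-problem characterization of the $h$-transformed generator for pure-jump processes; but the explicit-density route is more self-contained here.) The initial-law claim is immediate: marginalizing \eqref{eq:posterior_path_measure} at $t=0$ — equivalently the displayed marginalization identity at $t=0$, using $h^\star_0$ and that no $\tau_k<0$ — gives $p^\star_0(z)\propto p_0(z)\,h^\star_0(z)$.

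For the rates, I would first record a structural observation: since the IPS generator $R_t$ connects $z$ only to single-coordinate neighbours $z^{i\to v}$, the candidate matrix $R^\star_t(z,\tilde z)\coloneq R_t(z,\tilde z)\,h^\star_t(\tilde z)/h^\star_t(z)$ for $\tilde z\neq z$ (with diagonal $R^\star_t(z,z)=-\sum_{\tilde z\neq z}R^\star_t(z,\tilde z)$) inherits exactly the IPS form, with off-diagonal entries $R^\star_t(z,z^{i\to v})=r_{i,t}(v\mid z)\,h^\star_t(z^{i\to v})/h^\star_t(z)=r^\star_{i,t}(v\mid z)$ as in \eqref{eq:doob_rates}. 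So it suffices to show that the IPS with initial law $p^\star_0$ and rate family $R^\star_t$ has path density proportional to $\big(\prod_k G_{\tau_k}(z_{\tau_k})\big)$ times the $P$-density. Writing the candidate density from the Section~2.1 formula, the jump factors contribute $\prod_n R_{t_n}(z_{t_{n-1}},z_{t_n})\cdot\prod_n h^\star_{t_n}(z_{t_n})/h^\star_{t_n}(z_{t_{n-1}})$ and the initial factor contributes an extra $h^\star_0(z_0)$ relative to the $P$-density.

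The crux is the holding-time factor $\exp\!\big(\int_0^T R^\star_t(z_t,z_t)\,dt\big)$. Between reset times $h^\star$ solves the backward Kolmogorov equation $\partial_t h^\star_t(z)+\sum_{\tilde z}R_t(z,\tilde z)h^\star_t(\tilde z)=0$, which rearranges to $\partial_t\log h^\star_t(z)=R^\star_t(z,z)-R_t(z,z)$; hence $\int_0^T R^\star_t(z_t,z_t)\,dt=\int_0^T R_t(z_t,z_t)\,dt+\int_0^T \partial_t\log h^\star_t(z_t)\,dt$. For a fixed piecewise-constant path I would evaluate the last integral interval by interval via the fundamental theorem of calculus for a càdlàg integrand: on each jump-free sub-interval it produces $\log h^\star_{t_n^-}(\cdot)-\log h^\star_{t_{n-1}}(\cdot)$, and at every reset time it crosses it contributes $\log G_{\tau_k}(z_{\tau_k})$ through the reset relation \eqref{eq:reset}. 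Combining these boundary terms with the jump ratios $\prod_n h^\star_{t_n}(z_{t_n})/h^\star_{t_n}(z_{t_{n-1}})$ and the initial $h^\star_0(z_0)$, all $h^\star$-terms telescope — the terminal condition $h^\star_T\equiv 1$ annihilates the last boundary term — leaving precisely $\prod_k G_{\tau_k}(z_{\tau_k})$. Thus the candidate density equals the $P$-density times $\prod_k G_{\tau_k}(z_{\tau_k})$ up to a constant, which is then forced to equal the normalizer of \eqref{eq:posterior_path_measure}; this identifies $P^\star$ with the claimed IPS.

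The main obstacle, and the step that needs the most care, is the bookkeeping in this telescoping: $h^\star$ is only càdlàg in $t$, jumping exactly at the $\tau_k$, so the "fundamental theorem of calculus" step is really the FTC for functions of bounded variation with jumps, applied path by path, and one must check that the random jump times $t_n$ almost surely avoid the finitely many deterministic reset times $\tau_k$. Alongside this I would supply the standard auxiliary facts: $h^\star_t(z)$ is finite (from (A1)/non-explosion and boundedness of the $G_{\tau_k}$ on the finite space $\gZ$), is $C^1$ on each reset-free interval so that the backward equation holds pointwise, and is strictly positive on the support of $P^\star$ so that the ratios $h^\star_t(z^{i\to v})/h^\star_t(z)$ are well-defined (on states where $h^\star_t$ vanishes, $P^\star$ places no mass and the rate there is immaterial).
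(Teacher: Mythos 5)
Your proposal is correct, but it takes a genuinely different route from the paper's. The paper proves Proposition~\ref{prop:doob_h_transform} at the generator level: it establishes that $M_t=\E_P[M_T\mid\gH_t]$ is the relevant $P$-martingale, rewrites the $P^\star$-transition operator by change of measure as $\E_{P^\star}[f(Z_{t+\Delta_t})\mid Z_t=z]=\E_P[f(Z_{t+\Delta_t})\,h^\star_{t+\Delta_t}(Z_{t+\Delta_t})\mid Z_t=z]/h^\star_t(z)$, then Taylor-expands $h^\star_{t+\Delta_t}$ via the Kolmogorov backward equation, conditions on the number of jumps in $[t,t+\Delta_t]$, and reads off the IPS generator in the $\Delta_t\to 0$ limit. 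You instead verify the claim at the path-measure level: you define the candidate transformed rate matrix $R^\star_t$, plug it into the explicit c\`adl\`ag path-density formula from Section~2.1, and show by a telescoping argument — using $\partial_t\log h^\star_t(z)=R^\star_t(z,z)-R_t(z,z)$ on reset-free intervals, the reset relation~\eqref{eq:reset} at each $\tau_k$, and the terminal condition $h^\star_T\equiv 1$ — that the resulting density is $\propto\bigl(\prod_k G_{\tau_k}(z_{\tau_k})\bigr)$ times the $P$-density. Both routes are standard for Doob $h$-transforms of pure-jump processes and both are sound; the paper's is more self-contained (it never needs the explicit path-density formula and avoids the c\`adl\`ag bookkeeping you correctly flag as the delicate step), while yours identifies the Radon--Nikodym derivative globally in one stroke rather than infinitesimally and makes the role of the reset jumps of $h^\star$ at the $\tau_k$ especially transparent. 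Your remarks about a.s.\ disjointness of jump and reset times, strict positivity of $h^\star$ on the support of $P^\star$, and FTC for BV/c\`adl\`ag integrands are exactly the right caveats to discharge in a full write-up. One cosmetic note: the paper's appendix displays the backward equation with the sign $\partial_t h^\star_t=\mathcal{L}_t h^\star_t$, which is a typo (the surrounding Taylor expansion uses the correct sign $\partial_t h^\star_t=-\mathcal{L}_t h^\star_t$, as do you).
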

Note that this proposition follows from standard arguments on the conditioning of CTMCs, see \citep{huang2016reconstructing, corstanje2023conditioning,corstanje2025guided}. We provide a complete proof of Proposition \ref{prop:doob_h_transform} in Appendix \ref{proof:doob_h_transform}. 
This formulation generalizes the posterior distribution used in reward-guided fine-tuning for discrete diffusion models \citep{liderivative2025, wangfine2025, lee_debiasing_2025} to multiple potential times. 
Simulating directly from these transformed rates $r^{\star}_{i,t}$ is generally intractable because computing the look-ahead function $h^\star_t(z)$ requires solving a high-dimensional integral over future trajectories under $P$. 

\subsection{Latent IPS}

Let $\{P_\theta : \theta\in\Theta\}$ denote a family of prior path measures on $D([0,T],\gZ)$, 
where each $P_\theta$ corresponds to an IPS with initial distribution $p_{0}^{\theta}$ 
and local transition rates $r^\theta_{i,t}(v\mid z)$. 

Auxiliary information about the latent trajectory is available in the form of discrete-time observations 
\[
Y = (Y_{\tau_1},\dots,Y_{\tau_K}), \qquad 0<\tau_1<\cdots<\tau_K\le T.
\] 
Here and throughout, we use $\theta$ to denote the parameters of the full generative model, including both the prior IPS dynamics and the observation model.
For an observed trajectory $y_{1:K}$, potential functions in \eqref{eq:posterior_path_measure} correspond to emission distributions
\[
G_{\tau_k,\theta}(z) \;=\; p_\theta(y_{k}\mid Z_{\tau_k}=z), \qquad k=1,\dots,K,
\]
where we omit $Y_{\tau_k}$ to simplify the notation.
We denote the corresponding posterior path measure by $P^\star_\theta$, which, by Proposition~\ref{prop:doob_h_transform}, is again an IPS. 
We refer to this model as a \emph{Latent Interacting Particle System (Latent IPS)}. Figure \ref{fig:illustration} illustrates this construction for a latent susceptible-infected-recovered (SIR) model on a graph.

\paragraph{Maximum likelihood.} The logarithm of the marginal likelihood of the observations is
\begin{equation}\label{eq:latent_ips_likelihood}
    \mathcal{L}(\theta) \coloneq \log \mathbb{E}_{P_\theta}\Bigg[ \prod_{k=1}^K G_{\tau_k,\theta}(Z_{\tau_k}) \Bigg].
\end{equation}
This objective can be optimized via gradient steps using Fisher's identity (see Appendix~\ref{proof:fisher}), rewriting $\nabla_\theta \mathcal{L}(\theta)$ as
\begin{align}\label{eq:mle}
    \E_{P^\star_\theta}\left[\sum_{k=1}^K \nabla_\theta\log G_{\tau_k,\theta}(Z_{\tau_k}) + \nabla_\theta\log P_\theta(Z_{[0,T]}) \right].
\end{align}
These updates are not directly tractable, as they require samples from $P^\star_\theta$. Viable options include optimizing an evidence lower bound \citep{hinton1995wake}, importance sampling \citep{bornschein2014reweighted}, and SMC \citep{lawson2023x, mcnamara2024sequential}. Due to the sequential nature of latent IPSs, we opt for an SMC approach.

\subsection{Twisted SMC for posterior inference in latent IPS}
Classic design choices for SMC algorithms, such as the bootstrap particle filter (BPF) \citep{doucet2009tutorial}, notoriously display poor performance in continuous-time problems with sparse potentials, as weights are uniform in between potential times, leading to particle degeneracy \citep{chopin2023computational}.

\paragraph{Twisted targets.} Twisted SMC (tSMC) 
 addresses this limitation by specifying a favorable choice for the intermediate target distributions of SMC \citep{guarniero2017iterated, heng2020controlled}. The core idea is to introduce a learnable function $h_t^\psi: \mathcal{Z} \rightarrow \mathbb{R}_+$, the \textit{twist function}, approximating the look-ahead function $h^\star_t(z)$ from \eqref{eq:lookahead}.
The tSMC algorithm targets a sequence of \textit{twisted} distributions 
\begin{align}
    P^\psi_\theta(dZ_{[0,t]})  &\propto P_{\theta}(dZ_{[0,t]}) h^\psi_{t}(Z_t)\prod_{k:\tau_k \leq t}G_{\tau_k,\theta}(Z_{\tau_k}),
\end{align}
for $t\in[0,T]$. Note that $h^\psi_t$ can be a function of future observations $y_{>t}$, and we omit this dependence for notational simplicity.

\paragraph{Twist-induced proposal.}
We can approximately sample from this IPS using an approximation of a first-order Euler discretization as in \ref{sec:simulation}, resulting in the twist-induced proposal distribution
\begin{align}\label{eq:twist_approx_kernel}
    q^{\theta,\psi}_{t,\Delta_t}(z\mid z_t)
    &=\prod_{i\in\gI} \Bigl(\delta_{z^i_{t}, z^i} + \Delta_t \, r^{\theta,\psi}_{i,t}(z^i \mid z_{t})\Bigr), \\
    r^{\theta,\psi}_{i,t}(z^i \mid z_{t}) \!&\coloneq \!
    \begin{cases}
        r^{\theta}_{i,t}(z^i \mid z_{t}) {s^\psi_{i,t}(z^i,z_t)}, &z^i\neq z^i_t, \\
        - \!\sum_{v\neq z^i_t }r^{\theta}_{i,t}(v\mid z_{t}) {s^\psi_{i,t}(v,z_t)},\!\! & z^i = z^i_t,
    \end{cases}
\end{align}
where $s^\psi_{i,t}(z^i,z_t)\coloneq {h^\psi_t(z_t^{i\rightarrow z^i})}/{h^\psi_t(z_t)}$ is the concrete score \citep{meng2022concrete}.
Note that the twist function $h^\psi$ might not satisfy the reset conditions in \eqref{eq:reset}. We can quantify the extent to which it is violated by introducing the \textit{reset residual}
\begin{equation*}
    \rho_t(z) \coloneq \left(\log h^\psi_{t^-}(z)-\log G_t(z)-\log h^\psi_t(z)\right){\boldsymbol{1}_{t\in\{\tau_k\}}} .
\end{equation*}

Let the incremental effective sample size (ESS) with respect to the target posterior be defined as
\begin{gather}\label{eq:ess}
    \text{ESS}_{t,\Delta_t}(z_t)\coloneq {\mathbb{E}_{q^{\theta,\psi}_{t,\Delta_t}(\cdot\mid z_{t})}\left[ \left(\frac{p^{\theta,\star}_{t,\Delta_t}(Z_{t+\Delta_t}\midd z_{t}) }{ q^{\theta,\psi}_{t,\Delta_t}(Z_{t+\Delta_t}\midd z_{t})} \right)^2\right]}^{-1},
\end{gather}
where 
\begin{equation*}
    p^{\theta,\star}_{t,\Delta_t}(z\midd z_{t}) \propto {p^{\theta}_{t,\Delta_t}(z\midd z_{t}) h^\star_{t+\Delta_t}(z)G_{t}(z)^{ \boldsymbol{1}_{t\in\{\tau_k\}}}}.
\end{equation*}
In the following Theorem, we show that the incremental ESS at each step
can be lower bounded by composing terms depending on the approximation error of the twist function, the reset residual, and the discretization error.

\begin{theorem}
\label{thm:ess-final}
For $t\in [0,T]$, bound the twist error and the reset residual by
\begin{align*}\varepsilon_t&\coloneq\sup_{z\in\gZ}\big|\log h^\star_t(z)-\log h^\psi_t(z)\big|, \quad \delta_t \coloneq\sup_{z\in\gZ} |\rho_t(z)|.
\end{align*}
Under (A1)-(A3) for the prior dynamics and assuming $h^\psi$ to be Lipschitz continuous between observation times, there exists a $C<\infty$ such that the incremental ESS in \eqref{eq:ess} satisfies, uniformly in $z_t$,
\[
\mathrm{ESS}_{t,\Delta_t}(z_t)\ \ge\ \frac{\exp(-4(\varepsilon_{t+\Delta_t} + \delta_{t+\Delta_t} ))}{1 +C\Delta_t^2\,}.
\]
\end{theorem}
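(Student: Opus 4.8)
The plan is to control the chi-square divergence between the twist-induced proposal $q^{\theta,\psi}_{t,\Delta_t}(\cdot\mid z_t)$ and the (locally exact) optimal one-step target $p^{\theta,\star}_{t,\Delta_t}(\cdot\mid z_t)$, since $\mathrm{ESS}_{t,\Delta_t}(z_t)^{-1}=\mathbb{E}_{q}[(p^\star/q)^2]=1+\chi^2(p^\star\Vert q)$. I would split the ratio $p^{\theta,\star}_{t,\Delta_t}/q^{\theta,\psi}_{t,\Delta_t}$ into three multiplicative pieces: (i) the ratio of the \emph{exact} transition kernel $p^\theta_{t,\Delta_t}$ to its Euler approximation — which by Proposition~\ref{prop:tv-compact} differs from $1$ by at most $O(\Delta_t^2)$ in total variation, and with the Lipschitz/boundedness assumptions (A1)–(A3) also in a pointwise/likelihood-ratio sense on the support; (ii) the ratio of $h^\star_{t+\Delta_t}(z)G_t(z)^{\mathbf 1_{t\in\{\tau_k\}}}$, the reweighting defining $p^{\theta,\star}$, against the twist reweighting $h^\psi$ implicitly baked into $q^{\theta,\psi}$ via the concrete score $s^\psi$; and (iii) the normalizing-constant ratio, which is a single scalar and cancels favorably when bounding $\chi^2$.

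The heart of the argument is step (ii). Writing $H_t(z):=h^\star_t(z)$ and $H^\psi_t(z):=h^\psi_t(z)\,e^{\rho_t(z)}$ so that $H^\psi$ satisfies the reset identity up to the residual, the definition $\varepsilon_{t+\Delta_t}=\sup_z|\log H_{t+\Delta_t}-\log h^\psi_{t+\Delta_t}|$ and $\delta_{t+\Delta_t}=\sup_z|\rho_{t+\Delta_t}|$ give $|\log H_{t+\Delta_t}(z)-\log H^\psi_{t+\Delta_t}(z)|\le \varepsilon_{t+\Delta_t}+\delta_{t+\Delta_t}$ pointwise. The key observation is that both $p^{\theta,\star}$ and $q^{\theta,\psi}$ are (to leading order in $\Delta_t$) reweightings of the same base kernel by $H$ and $H^\psi$ respectively — $p^{\theta,\star}(z\mid z_t)\propto p^\theta_{t,\Delta_t}(z\mid z_t)H_{t+\Delta_t}(z)$ and $q^{\theta,\psi}$ is the Euler step of the Doob-transformed rates with $H^\psi$ in place of $H$, which to first order equals $p^\theta_{t,\Delta_t}(z\mid z_t)\,H^\psi_{t+\Delta_t}(z)/H^\psi_t(z_t)$ on each single-site move. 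Hence the likelihood ratio $p^{\theta,\star}/q^{\theta,\psi}$ equals, up to the $O(\Delta_t^2)$ factor from (i) and the scalar normalizer from (iii), the ratio $\bigl(H_{t+\Delta_t}(z)/H^\psi_{t+\Delta_t}(z)\bigr)\big/\bigl(H_{t+\Delta_t}(z')/H^\psi_{t+\Delta_t}(z')\bigr)$ in the sense of a self-normalized weight, and each such ratio lies in $[e^{-2(\varepsilon+\delta)},e^{2(\varepsilon+\delta)}]$. Squaring and taking the expectation under $q$ then yields $\mathbb{E}_q[(p^\star/q)^2]\le (1+C\Delta_t^2)\,e^{4(\varepsilon_{t+\Delta_t}+\delta_{t+\Delta_t})}$, which on inversion is exactly the claimed bound.

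The main obstacle I anticipate is making step (i) rigorous at the level of the \emph{pointwise} likelihood ratio rather than just total variation, because $\chi^2$ is sensitive to small denominators: one needs that $q^{\theta,\psi}_{t,\Delta_t}(z\mid z_t)$ does not vanish (or vanish much faster than $p^{\theta,\star}$) on the relevant support. This is where (A2) is essential — with $\Delta_t$ small enough the diagonal Euler mass $\prod_i(1+\Delta_t r^{\theta,\psi}_{i,t}(z^i_t\mid z_t))$ stays bounded below, single-site flips have probability $\asymp \Delta_t\, r^\theta s^\psi$ matching the exact kernel's $\asymp\Delta_t\,r^\theta s^\star$ with ratio controlled by $e^{\pm\varepsilon}$ (Lipschitz continuity of $h^\psi$ between observation times bounds $s^\psi_{i,t}$ away from $0$ and $\infty$), and multi-site flips are $O(\Delta_t^2)$ in \emph{both} kernels, so they contribute only to the $C\Delta_t^2$ factor rather than blowing up the ratio. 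A secondary technical point is handling the at-most-one observation time $t\in\{\tau_k\}$ inside the window, where the extra $G_t(z)^{\mathbf 1}$ factor and the jump $h^\psi_{t^-}\to h^\psi_t$ must be absorbed into the reset residual $\rho$; this is bookkeeping once the residual is defined as in the statement, contributing the $\delta_{t+\Delta_t}$ term symmetrically with $\varepsilon_{t+\Delta_t}$. Assembling the three factors and using $1+\chi^2=\mathrm{ESS}^{-1}$ gives the result; I would track the constant $C$ explicitly through (A1)–(A3) and the Lipschitz constant of $h^\psi$ but not optimize it.
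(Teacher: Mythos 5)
Your plan is correct and follows essentially the same route as the paper's proof: you factor the importance weight $p^{\theta,\star}/q^{\theta,\psi}$ into tilting-error pieces bounded pointwise by $e^{2(\varepsilon_{t+\Delta_t}+\delta_{t+\Delta_t})}$ (including the normalizing-constant ratios) and a discretization piece whose second moment under $q$ is $1+C\Delta_t^2$ via the same no-flip/single-flip/multi-flip case analysis used in Proposition~\ref{prop:tv-compact}, exactly as in the paper's Lemma on the $\chi^2$ bound and its Steps 1--4. The only differences are organizational (you merge the twist and reset comparisons through $H^\psi=h^\psi e^{\rho}$ instead of introducing the two intermediate kernels $p^{\theta,\psi}$ and $\bar p^{\theta,\psi}$), and the obstacle you flag about small denominators is handled the same way the paper handles it.
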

For a proof, see Appendix~\ref{proof:ess-final}.

\usetikzlibrary{positioning, arrows.meta, shapes.geometric, matrix, calc, backgrounds, fit}
\begin{figure*}[ht]
\centering
\begin{tikzpicture}[
    >=Stealth, font=\sffamily, thick,
    line join=round, line cap=round, 
    vecbox/.style  = {draw=black!60, fill=blue!10, 
                      minimum width=1.6cm, minimum height=0.55cm,
                      thick, font=\small, align=center, rounded corners=2pt},
    operator/.style= {circle, draw=black!60, fill=green!10, 
                      inner sep=2pt, font=\small, thick},
    swaptext/.style= {fill=gray!3, align=center,
                      font=\scriptsize, inner sep=4pt, rounded corners=2pt},
    cO/.style = {fill=orange!45},
    cB/.style = {fill=blue!45!cyan!50},
    cG/.style = {fill=green!45!gray!55}
  ]
  
  \def\Ya{ 0.76} \def\Yb{ 0.38} \def\Yc{ 0.0} \def\Yd{-0.38} \def\Ye{-0.76}
  \def\Ytop{0.95} \def\Ybot{-0.95}
  \def\cellh{0.38} 
  \def\xd{0.25} \def\yd{0.18} 
 
  \node[font=\small\bfseries, above=0.15cm] at (0.22, \Ytop) {$z_t$};
 
  \def\Zw{0.45}  
  \def\Zx{0.0}   
 
  \draw[fill=white, draw=none] (\Zx, \Ybot) rectangle (\Zx+\Zw, \Ytop);
 
  \filldraw[cO, draw=none] (\Zx, \Ya-\cellh/2) rectangle ++(\Zw, \cellh);
  \filldraw[cB, draw=none] (\Zx, \Yb-\cellh/2) rectangle ++(\Zw, \cellh);
  \filldraw[cB, draw=none] (\Zx, \Yc-\cellh/2) rectangle ++(\Zw, \cellh);
  \filldraw[cG, draw=none] (\Zx, \Yd-\cellh/2) rectangle ++(\Zw, \cellh);
  \filldraw[cO, draw=none] (\Zx, \Ye-\cellh/2) rectangle ++(\Zw, \cellh);
 
  \foreach \y in {0.57, 0.19, -0.19, -0.57}
    \draw[black!60, thin] (\Zx, \y) -- (\Zx+\Zw, \y);
    
  \draw[draw=black!60, thin] (\Zx, \Ybot) rectangle (\Zx+\Zw, \Ytop);
 
  \node[font=\small, left=0.1cm] at (\Zx, \Ya) {$i=1$};
  \node[font=\small, left=0.1cm] at (\Zx, \Ye) {$i=d$};
  \node[font=\small, left=0.18cm] at (\Zx, \Yc) {$\vdots$};
 
  \def\Tx{1.8} 
  \def\Tw{0.45}  
 
  \begin{scope}[on background layer]
    \draw[fill=blue!10, draw=black!60, thin] (\Tx, \Ytop) -- ++(\xd,\yd) -- ++(3*\Tw,0) -- (\Tx+3*\Tw, \Ytop) -- cycle;
    \draw[fill=blue!20, draw=black!60, thin] (\Tx+3*\Tw, \Ytop) -- ++(\xd,\yd) -- ++(0,-1.9) -- (\Tx+3*\Tw, \Ybot) -- cycle;
    
    \draw[black!60, thin] (\Tx+1*\Tw, \Ytop) -- ++(\xd,\yd);
    \draw[black!60, thin] (\Tx+2*\Tw, \Ytop) -- ++(\xd,\yd);
    \foreach \y in {0.57, 0.19, -0.19, -0.57}
      \draw[black!60, thin] (\Tx+3*\Tw, \y) -- ++(\xd,\yd);
  \end{scope}
 
  \draw[fill=blue!5, draw=none] (\Tx, \Ybot) rectangle (\Tx+3*\Tw, \Ytop);
 
  \filldraw[cO, draw=none] (\Tx+1*\Tw, \Ya-\cellh/2) rectangle ++(\Tw, \cellh);
  \filldraw[cB, draw=none] (\Tx+0*\Tw, \Yb-\cellh/2) rectangle ++(\Tw, \cellh);
  \filldraw[cB, draw=none] (\Tx+0*\Tw, \Yc-\cellh/2) rectangle ++(\Tw, \cellh);
  \filldraw[cG, draw=none] (\Tx+2*\Tw, \Yd-\cellh/2) rectangle ++(\Tw, \cellh);
  \filldraw[cO, draw=none] (\Tx+1*\Tw, \Ye-\cellh/2) rectangle ++(\Tw, \cellh);
 
  \foreach \y in {0.57, 0.19, -0.19, -0.57}
    \draw[black!60, thin] (\Tx, \y) -- (\Tx+3*\Tw, \y);
  \draw[black!60, thin] (\Tx+1*\Tw, \Ybot) -- (\Tx+1*\Tw, \Ytop);
  \draw[black!60, thin] (\Tx+2*\Tw, \Ybot) -- (\Tx+2*\Tw, \Ytop);
  
  \draw[draw=black!60, thin] (\Tx, \Ybot) rectangle (\Tx+3*\Tw, \Ytop);
 
  \node[font=\small\bfseries, above=0.15cm] at (\Tx+1.5*\Tw+\xd/2, \Ytop+\yd) {$\Phi_t(c_t)$};
  
  \node[font=\small] at (\Tx+0.5*\Tw, \Ybot-0.2) {$1$};
  \node[font=\small] at (\Tx+1.5*\Tw, \Ybot-0.2) {$\dots$};
  \node[font=\small] at (\Tx+2.5*\Tw, \Ybot-0.2) {$V$};
 
  \draw[->, dashed, black!50, thin] (\Zx+\Zw+0.05, \Ya) -- (\Tx+1*\Tw-0.05, \Ya);
  \draw[->, dashed, black!50, thin] (\Zx+\Zw+0.05, \Yb) -- (\Tx+0*\Tw-0.05, \Yb);
  \draw[->, dashed, black!50, thin] (\Zx+\Zw+0.05, \Yc) -- (\Tx+0*\Tw-0.05, \Yc);
  \draw[->, dashed, black!50, thin] (\Zx+\Zw+0.05, \Yd) -- (\Tx+2*\Tw-0.05, \Yd);
  \draw[->, dashed, black!50, thin] (\Zx+\Zw+0.05, \Ye) -- (\Tx+1*\Tw-0.05, \Ye);
 
  \def\Ex{4.2}
  \def\Ew{0.45}          
  \def\boxH{0.38}        
 
  \newcommand{\drawBlock}[2]{
    \draw[fill=blue!10, draw=black!60, thin] (\Ex, #1+\boxH/2) -- ++(\xd,\yd) -- ++(\Ew,0) -- (\Ex+\Ew, #1+\boxH/2) -- cycle;
    \draw[fill=blue!20, draw=black!60, thin] (\Ex+\Ew, #1+\boxH/2) -- ++(\xd,\yd) -- ++(0,-\boxH) -- (\Ex+\Ew, #1-\boxH/2) -- cycle;
    \draw[#2, draw=black!60, thin] (\Ex, #1-\boxH/2) rectangle ++(\Ew, \boxH);
  }
 
  \drawBlock{\Ye}{cO}
  \drawBlock{\Yd}{cG}
  \drawBlock{\Yc}{cB}
  \drawBlock{\Yb}{cB}
  \drawBlock{\Ya}{cO}
 
  \draw[->, thick] (\Tx+3*\Tw+\xd+0.1, 0) -- (\Ex-0.1, 0);
 
  \draw[-, thick] (\Ex+\Ew+\xd+0.1, 0) -- (5.4, 0);
        
  \node[font=\normalsize] (Sum) at (5.9, 0) {$\displaystyle\sum_{i=1}^d$};
 
  \def\Px{7.9} 
  \def\Py{1.45} 
 
  \node[vecbox] (StMid) at (\Px, 0) {$S_t(z_t)$};
  
  \draw[->, thick] (6.3, 0) -- (StMid.west);
 
  \node[vecbox] (StTop) at (\Px,  \Py) {$S_t(z_t^{i \to v})$};
  \node[vecbox] (StBot) at (\Px, -\Py) {$S_t(z_t^{j \to u})$};
 
  \draw[->, thick] (StMid.north) -- node[swaptext, right=2pt] {$+ \Phi_t(c_t)_{[i, v]} - \Phi_t(c_t)_{[i, z^i_t]}$} (StTop.south);
  \draw[->, thick] (StMid.south) -- node[swaptext, right=2pt] {$+ \Phi_t(c_t)_{[j, u]} - \Phi_t(c_t)_{[j, z^j_t]}$} (StBot.north);
 
  \node[font=\small, anchor=west] (OutTop) at (\Px+2.6,  \Py) {$\log h_t^\psi(z_t^{i \to v})$};
  \node[font=\small, anchor=west] (OutMid) at (\Px+2.6,  0.0) {$\log h_t^\psi(z_t)$};
  \node[font=\small, anchor=west] (OutBot) at (\Px+2.6, -\Py) {$\log h_t^\psi(z_t^{j \to u})$};
 
  \draw[->, thick] (StTop.east) -- node[operator] {$\phi$} (OutTop.west);
  \draw[->, thick] (StMid.east) -- node[operator] {$\phi$} (OutMid.west);
  \draw[->, thick] (StBot.east) -- node[operator] {$\phi$} (OutBot.west);
 
  \begin{scope}[on background layer]
    \node[draw=black!30, dashed, rounded corners=6pt, fill=gray!3,
          fit=(StTop)(StBot)(StMid)(OutTop)(OutMid)(OutBot),
          inner sep=8pt] (RPbox) {}; 
  \end{scope}
 
\end{tikzpicture}
\caption{TwistNet uses a context encoder to produce a table of embeddings for every coordinate and every possible state value, independently of the current state $z_t$. The state $z_t$ then selects one embedding per coordinate, and the selected embeddings are pooled into a single representation of the full state. To evaluate the twist at any single-site modification of $z_t$, TwistNet reuses this pooled representation and updates only the contribution of the modified coordinate, enabling parallel evaluation of all single-site neighboring states.}\label{fig:twistnet}
\end{figure*}

\subsection{Efficient twist parameterization.}\label{sec:efficient}
Parameterizing the twist–induced proposal in \eqref{eq:twist_approx_kernel} is challenging because, at each step and for a given state $z_t$, we need all ratios
$$
s^\psi_{i,t}(v,z_t)
= \frac{h_t^\psi(z_t^{i\to v})}{h_t^\psi(z_t)},
\qquad i \in \gI,\ v \in \mathcal V,
$$
that is, the twist evaluated at every single–site modification $z_t^{i\to v}$ of $z_t$. A naïve implementation would call $h_t^\psi$ separately on each configuration $z_t^{i\to v}$, requiring $O(dV)$ forward passes per time step.
To make this structure explicit, define for each configuration $z\in\mathcal Z$ the matrix
$$
H_t^\psi(z) \in \mathbb R_+^{d\times V},
\qquad
H_t^\psi(z)_{[i,v]} \coloneqq h_t^\psi\bigl(z^{i\to v}\bigr).
$$
In particular, $H_t^\psi(z)_{[i,z^i]} = h_t^\psi(z)$ for all $i\in \gI$. By construction, replacing the $i$-th coordinate of $z$ by any label $u \in \mathcal V$ leaves the \textit{row} $H_t^\psi(z)_{[i,\cdot]}$ unchanged:
\begin{equation}\label{eq:invariance}
H_t^\psi(z)_{[i,v]}
= H_t^\psi\bigl(z^{i\to u}\bigr)_{[i,v]},
\qquad u,v\in\mathcal V,\ i\in \gI.    
\end{equation}

This is the functional identity we would like the learned mapping $z \mapsto H_t^\psi(z)$ to satisfy: each entry $H_t^\psi(z)_{[i,v]}$ should depend on $z$ only through the context provided by the other coordinates and the choice of index–value pair $(i,v)$, not on the current value $z^i$ itself.

A direct implementation of $H_t^\psi(z)$ by explicitly forming all $z^{i\to v}$ would be prohibitively expensive. Instead, we directly exploit the product structure of the state space to build an efficient alternative. We first use a \textbf{context encoder} that can depend on any contextual information $c_t$ we wish to make available for the model at time $t$, except for the state $z_t$ itself:
\begin{equation}\label{eq:encoder}
    \Phi_t(c_t) \in \mathbb R^{d\times V\times m}, 
\end{equation}
where $c_t$ can include, for instance, future observations and observation times $(Y_{\ge t},\tau_{\ge t})$, covariates, and positional information if available. We highlight that the encoder in \eqref{eq:encoder} is independent of the current state $z_t$, and produces a tensor of embeddings associated with setting each coordinate $i$ to a state $v$. Based on these embeddings, we can compose the output $h^\psi_t(z)$ by aggregating the representations at the positions indexed by our target state, via a map
\begin{equation}
    \label{eq:rho}
   \log h_t^\psi(z)
= \phi\!\left(\sum_{i\in \gI} \Phi_t(c_t)_{[i,z^i]}\right),
\end{equation}
where $\phi:\mathbb R^m \to \mathbb R$ can be a learned map (e.g., a small MLP), a construction mirroring the DeepSets model \citep{zaheer2017deep}. Let $S_t(z) \coloneqq \sum_{j\in \gI} \Phi_t(c_t)_{[j,z^j]}$ denote the pooled representation of the current configuration. For each single–site update $z^{i\to v}$, we can reuse $S_t(z)$ and adjust only the contribution of coordinate $i$:
\begin{equation}
   \log h_t^\psi\bigl(z^{i\to v}\bigr)\!
=\! \phi\Bigl(S_t(z) + \Phi_t(c_t)_{[i,v]} - \Phi_t(c_t)_{[i,z^i]}\Bigr),
\end{equation}
an operation trivial to parallelize. Moreover, the identity in \eqref{eq:invariance} holds exactly for this parameterization: replacing $z^i$ by any $u$ changes both $S_t(z)$ and the subtraction $-\Phi_t(c_t)_{[i,z^i]}$ in a way that cancels out, leaving the argument of $\phi$ (and thus $H_t^\psi(\cdot)_{[i,v]}$) invariant for every $v$.

We refer to this architecture, which (i) amortizes the cost of encoding contextual information across all $(i,v)$ pairs and (ii) enforces the desired invariance of \eqref{eq:invariance}, as \textbf{TwistNet}. Figure \ref{fig:twistnet} illustrates this construction: the context encoder computes $\Phi_t(c_t)$ once, the current state $z_t$ selects one embedding per coordinate to form $S_t(z_t)$, and each single-site modification $z_t^{i \to v}$ is evaluated by replacing only the $i$-th contribution in the pooled representation.

\paragraph{Twist learning.}\label{sec:twist_learning}
Multiple objective functions have been proposed to learn the twist, from approaches based on consistency inspired by reinforcement learning \citep{heng2020controlled, lawson2018twisted}, to density ratio estimation \citep{lawson2022sixo, lawson2023x}. Recently, \cite{zhao2024probabilistic} proposed to minimize the mass-covering forward KL divergence with respect to the true posterior in the context of autoregressive language models. Forward KL objectives had previously been proposed for learning the proposal distribution in SMC by \cite{gu2015neural, lawson2023x}. We adapt this approach to our setting, and learn an amortized twist function by optimizing $D_\text{KL}(P^\star_\theta \,||\, P^\psi_\theta)$, which is proportional to
\begin{align}\label{eq:fwd_kl}
    &\mathbb{E}_{P^\star_\theta}\!\bigg[\sum_{i\in \gI}\!\bigg(\!\int_{0}^T\!\!r^{\theta,\psi}_{i,t}(Z^i_t\midd Z_t)dt +\!\!\! \!\!\!\!\sum_{u:Z^i_u\neq Z^i_{u^-}}\!\!\!\!\!\!\log s^{\theta,\psi}_{i,u}(Z_u^i,Z_{u^-})\! \bigg)\!\bigg]\!\nonumber\\
    &+\mathbb{E}_{p_0^\star}\left[\log q_0^\psi(Z_0)\right],
\end{align}
where $s^{\theta,\psi}_{i,t}(v,z) = h^\psi_t(z^{i\to v})/h^\psi_t(z)$ 
is the concrete score, and $h^\psi_t$ may depend on future observations 
$y_{>t}$.%
\paragraph{Wake-sleep.} In order to compute a tractable approximation to \eqref{eq:fwd_kl}, we employ the strategy proposed in \cite{zhao2024probabilistic} and in \textit{wake-sleep} algorithms \citep{hinton1995wake, bornschein2014reweighted, le2020revisiting, mcnamara2024sequential}, and perform ancestral sampling of the trajectory from the prior, and of the observations from the emission distribution. We obtain the time-discretized \textit{sleep objective}
\begin{align}\label{eq:approx_fwd_kl}
    \mathcal{L}_\mathrm{s}&(\psi;\,z_{t_0:t_M}, y_{1:K},\theta) = \\
    &-\log q^\psi_0(z_0\mid y_{1:K}) 
    - \sum_{m=0}^{M-1}\sum_{i\in \gI}\Bigl[
    \Delta_{t_{m+1}}\, r^{\theta,\psi}_{i,t_m}(z^i_{t_m}\mid z_{t_m}) \nonumber\\
    &+\;\boldsymbol{1}[z^i_{t_m}\neq z^i_{t_{m+1}}]
    \log s^{\psi}_{i,t_m}(z^i_{t_{m+1}},z_{t_m})\Bigr],
\end{align}
where $0=t_0<t_1<\dots<t_M=T$ is a time grid containing $\{\tau_k\}_{k=1}^K$ with $\Delta_{t_m} = t_m-t_{m-1}$, and $q^\psi_0$ is an approximate posterior initial distribution. Note that, although omitted for notational simplicity, $s^\psi_{i,t_m}$ is conditioned on future observations $y_{>t_m}$.
We optimize this objective with respect to the parameters $\psi$, while holding $\theta$ fixed.
 Note that optimizing \eqref{eq:approx_fwd_kl} does not require backpropagating through the sampled trajectories, avoiding the need to use reparameterization \citep{jang2017categorical} or REINFORCE \citep{williams1992simple}.

We alternate $\psi-$updates on the objective in \eqref{eq:approx_fwd_kl} with $\theta-$updates approximating \eqref{eq:mle} based on trajectories sampled using tSMC, similarly to \cite{lawson2023x} and \cite{mcnamara2024sequential}.
 For the $\theta$-updates, we use the discretized log path 
density of the prior
\begin{align}\label{eq:disc_path}
    \log \hat{P}_\theta&(z_{[0,T]}) = \nonumber\\
    &\log p_0^\theta(z_0) 
    + \sum_{m=0}^{M-1}\sum_{i\in \gI}\Bigl[
        \Delta_{t_{m+1}} r^{\theta}_{i,t_m}(z^i_{t_m}\mid z_{t_m}) \nonumber\\
        &+\;\boldsymbol{1}[z^i_{t_m}\neq z^i_{t_{m+1}}]
        \log r^{\theta}_{i,t_m}(z^i_{t_{m+1}}\mid z_{t_m})\Bigr],
\end{align}
to form the wake gradient estimator
\begin{align}\label{eq:approx_mle}
    &\widehat{\nabla_\theta \mathcal{L}_\mathrm{w}}\big(\theta;\,\big\{z^{(s)}_{t_0:t_M},\bar w_T^{(s)}\big\}_{s=1}^S, y_{1:K}\big) = \\
    &\sum_{s=1}^S \bar{w}^{(s)}_T \left[
    \sum_{k=1}^K \nabla_\theta\log G_{\tau_k,\theta}(z^{(s)}_{\tau_k}) 
    + \nabla_\theta\log \hat P_\theta(z^{(s)}_{[0,T]}) \right],
\end{align}
where $\bar w^{(s)}_T$ are SMC weights as in \eqref{eq:smc_weights}.
We note that the estimator in \eqref{eq:approx_mle} is asymptotically consistent for the gradient of the maximum likelihood objective \citep{lawson2023x}.
Simplified pseudocode is presented in Algorithm \ref{algo:ws}, and the detailed training algorithm and its complexity are reported in Appendix~\ref{appdx:algo}.

\begin{algorithm}[t]
\small
\caption{Wake--sleep with tSMC for latent IPS}
\label{algo:ws}
\begin{algorithmic}[1]
\STATE \textbf{Inputs:} Dataset $\mathcal{D}$; time grid $0=t_0<\dots<t_M=T$; 
       particles $S$; batch size $B$
\STATE Initialize $\theta$, $\psi$
\REPEAT
  \STATE \textbf{Sleep phase:} \textcolor{gray}{\textit{\# update $\psi$, hold $\theta$ fixed}}
  \FOR{$b = 1, \dots, B$}
    \STATE Simulate $z^{(b)}_{t_0:t_M} \sim P_\theta$ via Euler steps
    \STATE Simulate $\tilde{y}^{(b)}_k \sim p_\theta(\cdot \mid z^{(b)}_{\tau_k})$, \; $k=1,\dots,K$
    \STATE Compute $\ell^{(b)}_\mathrm{s}=\gL_\mathrm{s}(\psi;\,z^{(b)}_{t_0:t_M}, \tilde{y}^{(b)}_{1:K},\theta)$ via \eqref{eq:approx_fwd_kl} 
  \ENDFOR
  \STATE $\psi \leftarrow \textsc{GradStep}_\psi\!\left(\nabla_\psi \frac{1}{B}\sum_{b=1}^B \ell^{(b)}_\mathrm{s}\right)$
  \STATE \textbf{Wake phase:} \textcolor{gray}{\textit{\# update $\theta$, hold $\psi$ fixed}}
  \FOR{$b = 1, \dots, B$}
    \STATE Sample $(y^{(b)}_{1:K}, \tau^{(b)}_{1:K}) \sim \mathcal{D}$
    \STATE Run $\{z^{(b,s)}_{t_0:t_M}, \bar{w}^{(b,s)}_T\}_{s=1}^S \leftarrow 
           \mathrm{tSMC}(P^\psi_\theta;\, y^{(b)}_{1:K}, \tau^{(b)}_{1:K})$
    \STATE Compute $g_\mathrm{w}^{(b)}\!\! =\!\widehat{\nabla_\theta \gL}^{(b)}_\mathrm{w}\!\!(\theta; \{z^{(b,s)}_{t_0:t_M}, \bar{w}^{(b,s)}_T\}_{s=1}^S, y^{(b)}_{1:K})$ via \eqref{eq:approx_mle}
  \ENDFOR
  \STATE $\theta \leftarrow \textsc{GradStep}_\theta\!\left(\frac{1}{B}\sum_{b=1}^B g_\mathrm{w}^{(b)}\right)$
\UNTIL{convergence}
\end{algorithmic}
\end{algorithm}
\begin{figure}[!t]

  \centering
  
    \includegraphics[width=\linewidth]{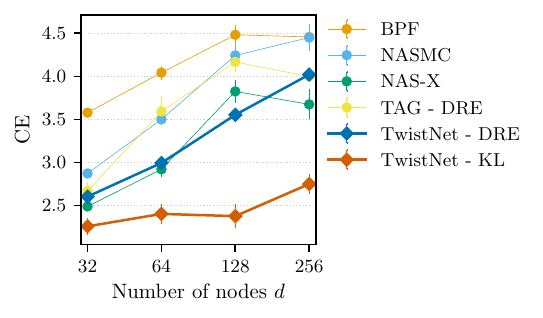}

  \caption{
  Latent trajectory reconstruction, measured by cross-entropy loss on the test set of ground truth trajectories with respect to the posterior approximations. Error bars correspond to two standard errors computed across trajectories.}\label{fig:nll_vs_dim}
\end{figure}

\begin{figure*}[ht]
\centering
\footnotesize
\captionof{table}{Parameter estimates and relative parameter error (RPE) for the SIRS model with 32 nodes, mean $\pm$ 2 standard deviations across 10 random seeds.}
\label{tab:final-param-estimates}
\begin{tabular}{lccccc}
 & $\boldsymbol{\alpha_0}$ & $\boldsymbol{\alpha_1}$ & $\boldsymbol{\beta}$ & $\boldsymbol{\gamma}$ & \textbf{RPE} \\
\hline \\
\textbf{Ground truth} & 0.1 & 1.0 & 0.4 & 0.05 & -- \\
TwistNet - KL & \textbf{0.113} $\pm$ 0.03 & 0.922 $\pm$ 0.08 & 0.393 $\pm$ 0.03 & 0.046 $\pm$ 0.01 & \textbf{0.330} $\pm$ 0.35 \\
TwistNet - DRE & 0.115 $\pm$ 0.03 &\textbf{0.957} $\pm$ 0.15 & 0.384 $\pm$ 0.03 & \textbf{0.048} $\pm$ 0.01 & 0.360 $\pm$ 0.34 \\
TAG - DRE & 0.124 $\pm$ 0.05 & 0.673 $\pm$ 0.43 & 0.377 $\pm$ 0.14 & 0.055 $\pm$ 0.02 & 0.982 $\pm$ 0.68 \\
NAS-X & 0.136 $\pm$ 0.04 & 0.880 $\pm$ 0.10 & \textbf{0.405} $\pm$ 0.04 & 0.058 $\pm$ 0.01 & 0.673 $\pm$ 0.39 \\
NASMC & 0.076 $\pm$ 0.09 & 0.936 $\pm$ 0.26 & 0.444 $\pm$ 0.11 & 0.061 $\pm$ 0.04 & 0.962 $\pm$ 0.56 \\
BPF & 0.791 $\pm$ 1.90 & 1.354 $\pm$ 2.05 & 0.618 $\pm$ 0.24 & 0.001 $\pm$ 0.00 & 9.461 $\pm$ 17.88 \\
\end{tabular}

\vspace{2em}

\includegraphics[width=0.85\linewidth]{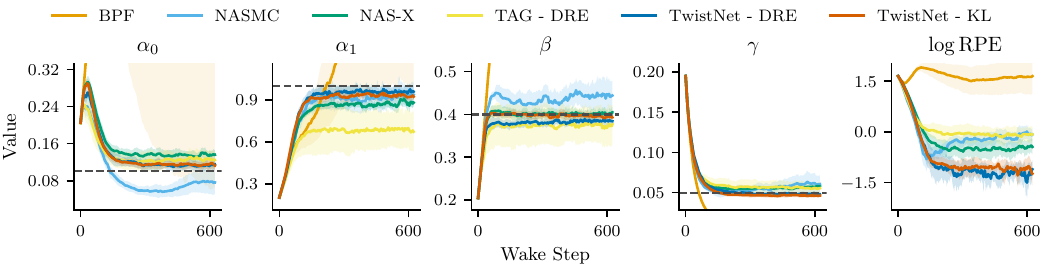}
\vspace{1em}

\captionof{figure}{ Evolution of the parameters and log relative parameter error ($\log\,$RPE) through wake steps updating the parameters $\theta$, for a SIRS model on a graph with 32 nodes.}
\label{fig:params}

\end{figure*}

\section{EXPERIMENTS}\label{sec:sirs}

\subsection{Latent SIRS model}

We study inference in a latent spatial SIRS IPS on graphs with state space $\gZ=\{S,I,R\}^d$ \citep{lanchier2024stochastic}. We simulate a dataset of trajectories using Gillespie's algorithm \citep{gillespie1977exact, wilkinson2018stochastic} with the following local rates:
\begin{gather}
  r_{i,t}(I\mid z) = \bigl(\alpha_0 + \alpha_1\, \textstyle\sum_{j\neq i} a_{ij}\,\sigma(\langle \xi_i,\xi_j\rangle)\,\delta_{z^j,I}\bigr)\,\delta_{z^i,S}, \nonumber \\
  r_{i,t}(R\mid z) = \beta\,\delta_{z^i,I}, \qquad
  r_{i,t}(S\mid z) = \gamma\,\delta_{z^i,R}, \label{eq:sirs_rates}
\end{gather}
where $a_{ij}$ is an adjacency matrix sampled from a graph with fixed expected degree, $\xi_i$ are normalized features acting as edge weights, and $\sigma$ is a logistic function ensuring non-negativity.
We let observation times be sampled uniformly in $[0,T]$ at $K=10$ irregular \emph{snapshots} $\tau_1<\dots<\tau_K$, and we sample observations from an emission distribution masking each node (denoted $y_k=\varnothing$) independently with probability $p_{\mathrm{mask}}=0.5$.

\paragraph{Task. }We conduct two experiments. First, we perform \emph{latent trajectory inference} \citep{eichentropic}: given observations $y_{1:K}$ and parameters $\theta$ at their true values, we measure how accurately our approximate posterior samples match the ground truth trajectory. We focus on scalability with respect to graph dimensionality, quantifying performance via categorical cross-entropy loss averaged over a discretized time grid.
Second, we \emph{estimate parameters} from an arbitrary initialization using only 50 training trajectories. Following Section \ref{sec:twist_learning}, we employ a \textit{wake-sleep} algorithm and evaluate accuracy via individual parameter estimates and total relative parameter error $\sum_j{|\hat \theta_j- \theta_j|}/{|\theta_j|}$ over the four parameters in equation \ref{eq:sirs_rates}.

\paragraph{Methods. }For our TwistNet parameterization, we experiment learning the twist using both our forward KL loss from \eqref{eq:approx_fwd_kl} and the DRE loss introduced by \citep{lawson2022sixo}, using a graph transformer as context encoder and a two-layer MLP as $\rho$ in \eqref{eq:rho}. We consider NASMC \citep{gu2015neural} and NAS-X \citep{lawson2023x}, and parameterize the proposal distribution with a graph transformer as the context encoder used in TwistNet, taking the current state as input and predicting a concrete score field. The twist for NAS-X follows the same parameterization, with output pooled to a scalar. We experiment using TAG \citep{nisonoff_unlocking_2025} with DRE loss, and a regular BPF \citep{doucet2009tutorial}. 
We discuss further details in Appendix \ref{base:details} and \ref{sirs:details}.

\paragraph{Results. }As illustrated in Figure \ref{fig:nll_vs_dim}, we find the combination of TwistNet and the forward KL loss in \eqref{eq:fwd_kl} to scale the best with respect to dimensions for latent trajectory inference. We report results on parameter estimation on a graph with 32 nodes in Table \ref{tab:final-param-estimates}, and display convergence through wake steps in Figure \ref{fig:params}. Additional results are reported in Appendix~\ref{appdx:add_sirs}.

\begin{table*}[t]
\footnotesize
\centering
\caption{Binary cross-entropy loss for reconstruction and prediction of active fire maps, mean $\pm$ 2 standard errors across test trajectories.}
\label{tab:wildfire-results}
\begin{tabular}{lcccc}
\hline
& \multicolumn{2}{c}{\textbf{Full-week covariates}} & \multicolumn{2}{c}{\textbf{Past covariates only}} \\
\cline{2-3} \cline{4-5}
& \textbf{Reconstruction} & \textbf{Prediction} & \textbf{Reconstruction} & \textbf{Prediction} \\
\hline
TwistNet - KL  & \textbf{0.877} $\pm$ 0.36  & \textbf{1.149} $\pm$ 0.41  & \textbf{1.046} $\pm$ 0.389 & \textbf{1.424} $\pm$ 0.499 \\
TwistNet - DRE & 1.985 $\pm$ 0.59           & 1.541 $\pm$ 0.52           & 1.702 $\pm$ 0.532          & 1.649 $\pm$ 0.536 \\
NAS-X          & 2.392 $\pm$ 0.72           & 1.755 $\pm$ 0.53           & 1.787 $\pm$ 0.531          & 1.576 $\pm$ 0.523 \\
NASMC          & 2.008 $\pm$ 0.60           & 1.759 $\pm$ 0.57           & 3.350 $\pm$ 0.897          & 1.992 $\pm$ 0.563 \\
\hline
\end{tabular}
\end{table*}
\begin{figure*}
    \centering
    \includegraphics[width=\linewidth]{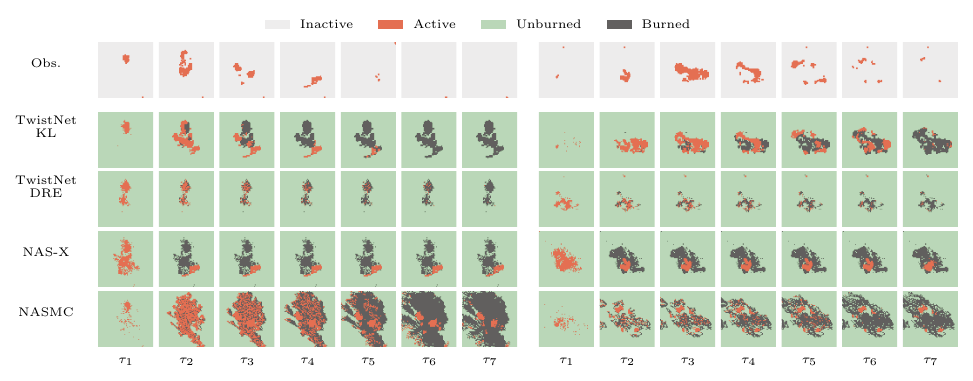}
    \caption{Observed active fire maps from WildFireSpreadTS \citep{gerard2023wildfirespreadts} and snapshots of approximate posterior samples at observation times, for the \emph{full-week covariate} regime. A pixel can only turn active if another one in its Moore neighborhood is, and transitions from unburned to burned are impossible. Observations are noisy, as satellite imagery for wildfires can be obscured by smoke and clouds \citep{schroeder2014new}.}
    \label{fig:wildfire}
\end{figure*}

\subsection{Wildfire trajectories}\label{sec:wildfires}
Forecasting wildfire spread is a problem of critical importance, given the impacts of wildfires and their increasing severity globally in recent years \citep{abatzoglou2016impact,walker2019increasing}. Physics-based models have long existed \citep{rothermel1972mathematical} but are difficult to calibrate given real-world constraints, limiting forecasting utility. With the increasing availability of satellite data, ML approaches for spatio-temporal prediction have emerged (e.g., \cite{coffield2019machine,prapas2021deep,apostolakis2022estimating, gerard2023wildfirespreadts}), yet they can lack physical insight (e.g., local propagation constraints) and have not yet seen widespread application in real-world scenarios. A third class of model in this context is based on the notion of local spreading of fire (across pixels) via stochastic cellular automata (SCAs), closely related to our latent IPS framework.  Prior work in developing SCAs for wildfires has largely focused on relatively simple discrete-time models \citep{grieshop2024data}, often hand-tuned \citep{clarke1994cellular,hargrove2000simulating}.

We model wildfires as latent IPSs, and consider a state space where each pixel in a $64\times64$ grid can be unburned ($U$), active ($A$), or burned ($B$), i.e. $\gZ=\{U,A,B\}^{64\times 64}$. We let the graph be a lattice with Moore neighborhood. We parameterize outgoing, off-diagonal local rates by a single neural network $F_\theta$ mapping to $\R_+^{d\times (V-1)}$, using the UTAE architecture \citep{garnot2021panoptic}. We constrain the dynamics by zeroing out the rate of flipping to $A$ whenever no neighbors are ignited, and do not allow transitions from $U$ to $B$ and vice versa.

\paragraph{Task.}Using the WildFireSpreadTS dataset \citep{gerard2023wildfirespreadts}, we extract 156 week-long trajectories on $64\times64$ grids where at least one pixel is initially active with two subsequent days of activity, split 130-26 into a train and test set. We evaluate reconstruction of active fire maps at observation times (i.e., conditioning on the whole sequence) and prediction (initialized by encoding only the first observation), both measured via binary cross-entropy from the empirical distribution given by 16 particles per datapoint, averaged across time. 
We report results in two regimes: conditioning on VIIRS reflectance channels for the full week, and conditioning only on channels observed up to each timepoint.
\paragraph{Methods.} 
 We use the UTAE architecture for the context encoder in TwistNet, and for proposal and twist in NAS-X and NASMC. We train initial distributions -- both posterior and prior -- using a smaller model with the same architecture. We let our emission distribution be factorized, and let $p(y^i_k = 1 \midd Z_{\tau_k}^i=A) = \sigma(\theta_\text{detect})$, 
where $\theta_\text{detect}$ is a scalar logit and $\sigma$ the logistic function. Due to their poor performance, we do not compare with TAG and BPF. For each method we also train an encoder for the first observation, used in the prediction task, by minimizing a forward KL loss in the \textit{sleep} phase. We report experimental details in Appendix~\ref{wf:details}.
\paragraph{Results.} Table \ref{tab:wildfire-results} reports results both with full-week covariates and with covariates restricted to those available up to each timepoint. TwistNet with the KL loss achieves the lowest average error in both regimes. By examining trajectories from the second best model (TwistNet with a DRE loss) in the full-week regime in Figure \ref{fig:wildfire}, we see a large gap in terms of how natural the trajectories look, and especially in how well the approximate posterior can bridge between observed states. Additional results are reported in Appendix~\ref{appdx:add_wf}.
\section{CONCLUSIONS}
We introduced an efficient posterior inference framework for systems of latent interacting CTMCs, and empirically demonstrated its effectiveness on challenging tasks on latent state inference and parameter learning for both simulated and real data. Our proposed approach outperforms the baselines, and in our experiments we saw this performance gap increase with the dimensionality of the problem under consideration.
A major bottleneck in the training of our method is the cost of simulating trajectories with twisted SMC. Recent advances in latent SDEs have shown that \textit{simulation-free} algorithms are possible \citep{bartosh2025sde, kiyohara2025neural}, and we believe adapting these methods to latent IPSs is an exciting avenue for future work.

\subsubsection*{Acknowledgements}
We thank the reviewers for their feedback on improving the paper. This work was supported by the National Science Foundation under awards NSF 2505006 and NSF 2425932, by the National Institutes of Health under awards R01-LM013344 and R01CA297869, by the Hasso Plattner Institute (HPI) Research Center in Machine Learning and Data Science at UCI, and by funding support from Google and from SAP.

\bibliographystyle{plainnat}
\bibliography{sample,fire_references}

\newpage
\section*{Checklist}

\begin{enumerate}

  \item For all models and algorithms presented, check if you include:
  \begin{enumerate}
    \item A clear description of the mathematical setting, assumptions, algorithm, and/or model. [Yes]
    \item An analysis of the properties and complexity (time, space, sample size) of any algorithm. [Yes]
    \item (Optional) Anonymized source code, with specification of all dependencies, including external libraries. [No]
  \end{enumerate}

  \item For any theoretical claim, check if you include:
  \begin{enumerate}
    \item Statements of the full set of assumptions of all theoretical results. [Yes]
    \item Complete proofs of all theoretical results. [Yes]
    \item Clear explanations of any assumptions. [Yes]     
  \end{enumerate}

  \item For all figures and tables that present empirical results, check if you include:
  \begin{enumerate}
    \item The code, data, and instructions needed to reproduce the main experimental results (either in the supplemental material or as a URL). [No]
    \item All the training details (e.g., data splits, hyperparameters, how they were chosen). [Yes]
    \item A clear definition of the specific measure or statistics and error bars (e.g., with respect to the random seed after running experiments multiple times). [Yes]
    \item A description of the computing infrastructure used. (e.g., type of GPUs, internal cluster, or cloud provider). [Yes]
  \end{enumerate}

  \item If you are using existing assets (e.g., code, data, models) or curating/releasing new assets, check if you include:
  \begin{enumerate}
    \item Citations of the creator If your work uses existing assets. [Yes]
    \item The license information of the assets, if applicable. [Not Applicable]
    \item New assets either in the supplemental material or as a URL, if applicable. [Not Applicable]
    \item Information about consent from data providers/curators. [Not Applicable]
    \item Discussion of sensible content if applicable, e.g., personally identifiable information or offensive content. [Not Applicable]
  \end{enumerate}

  \item If you used crowdsourcing or conducted research with human subjects, check if you include:
  \begin{enumerate}
    \item The full text of instructions given to participants and screenshots. [Not Applicable]
    \item Descriptions of potential participant risks, with links to Institutional Review Board (IRB) approvals if applicable. [Not Applicable]
    \item The estimated hourly wage paid to participants and the total amount spent on participant compensation. [Not Applicable]
  \end{enumerate}

\end{enumerate}

\clearpage
\appendix
\thispagestyle{empty}

\onecolumn

\runningtitle{Efficient Inference for Coupled Hidden Markov Models in Continuous Time and Discrete Space}

\onecolumn
\aistatstitle{Efficient Inference for Coupled Hidden Markov Models \\ in Continuous Time and Discrete Space: 
Supplementary Materials}

\section{Background}\label{appdx:background}
\paragraph{Inference for CTMCs.}

Inference methods for CTMCs have been extensively studied. Maximum likelihood estimation for time-homogeneous CTMCs is discussed in \citet{jackson2011multi,bladt2005statistical,mcgibbon2015efficient}. Expectation-maximization techniques for continuous-time hidden Markov models can be found in \citet{bureau2003applications,jackson2011multi,liu2015efficient}. Bayesian approaches include Markov chain Monte Carlo methods \citep{boys2008bayesian, hobolth2009simulation, rao2013fast} and variational methods. The latter include mean-field \citep{opper2007variational, cohn2010mean}, moment-based methods \citep{wildner2019moment}, combinations with MCMC \citep{zhang2017collapsed}, and extensions to hybrid processes \citep{kohs2021variational}. More recent methods include black-box variational inference with neural networks \citep{seifner2023neural}, foundation models \citep{berghaus2024neurips}, and expectation propagation \citep{eichentropic}. %

\paragraph{Related work.} Another directly related line of research focuses on simulation methods for Markov bridges, notably \cite{hobolth2009simulation, huang2016reconstructing, golightly2019efficient, corstanje2023conditioning, corstanje2025guided}. While less directly related, it is worth noting recent work discrete flow matching and diffusion methods based on CTMCs \citep{sun2023score,meng2022concrete,campbell2022continuous, igashov2023retrobridge, lou2023discrete, campbellgenerative}, as well as discrete neural samplers \citep{holderriethleaps, zhu2025mdns}. In particular, reward-guided generation for discrete diffusion models targets the posterior path measure of a CTMC, given by trajectories from a prior generative model tilted by a reward function at the endpoint $T$ \citep{wangfine2025, rectorsteering}, see \citep{uehara2025inference} for a review. While these methods have shown remarkable performance on posterior inference tasks, they are specific to the processes considered by discrete diffusion models.

\section{Proofs}\label{appdx:proofs}
\subsection{Proof of Proposition \ref{prop:tv-compact}}\label{proof:tv-compact}

By the functional form of the rates, reaching any $\tilde z$ with at least two coordinates changed requires at least two jumps.
By (A1) the jump times are dominated by an homogeneous Poisson process of rate $\bar\lambda$, hence on $[t,t+\Delta_t]$
\[
\mathbb{P}\big(\text{$\ge2$ jumps}\big)\;\le\;\mathbb{P}\big(U\ge2\big)
\;=\; O(\Delta_t^2),\quad U\sim\text{Poisson}(\bar\lambda\Delta_t).
\]
Therefore,
\[
\sum_{\substack{\tilde z:\,|\{i:\tilde z^i\neq z^i\}|\ge 2}} p_{t,\Delta_t}(\tilde z\mid z)=O(\Delta_t^2).
\]
As a next step, considering a starting state $z$, we partition the state space $\gZ$ into three cases and denote them as
$S_0=\{z\}$ (no change), $S_1=\{z^{i\to v}: i\in\gI,\ v\neq z^i\}$ (single flips), and 
$S_{\ge 2}=\{\tilde z:|\{i:\tilde z^i\neq z^i\}|\ge 2\}$ (multi flips).

By (A3), for any $(t,z,i,v)$,
\[
\left|\Delta_t\,r^i_t(v\mid z) - \int_t^{t+\Delta_t}\! r^i_{u}(v\mid z)\,du \right| \leq \int_t^{t+\Delta_t} \big|r^i_t(v\mid z) - r^i_{u}(v\mid z)\big| du \le \int_t^{t+\Delta_t}L(u-t)du = \frac{L}{2}\,\Delta_t^2,
\]
and similarly
\[
\left|\Delta_t\, \lambda_t(z) - \int_t^{t+\Delta_t}\! \lambda_{u}(z)\,du \right|
\le \frac{L_\lambda}{2}\,\Delta_t^2,
\qquad L_\lambda:=  d\times (V-1)\times L.
\]
Hence the following first–order Taylor expansions holds with uniform $O(\Delta_t^2)$ remainders:
\begin{align*}
k_{t,\Delta_t}(z'\mid z) =
\begin{cases}
1 - \int^{t+\Delta_t}_t \lambda_u(z)du + O(\Delta_t^2) = 1 - \lambda_t(z)\,\Delta_t \;+\; O(\Delta_t^2) & z'\in S_0 \\
    \int^{t+\Delta_t}_t r^i_u(v\mid z)du \;+\; O(\Delta_t^2)= r^i_t(v\mid z)\,\Delta_t \;+\; O(\Delta_t^2) &  z'\in S_1\\
    O(\Delta_t^2) & {z'\in S_{\ge2}}
\end{cases}
\end{align*}
Expanding \eqref{eq:prod-kernel}:
\begin{align*}
q_{t,\Delta_t}(z'\mid z)
= \begin{cases}
1 - \lambda_t(z)\,\Delta_t \;+\; O(\Delta_t^2) & z'\in S_0 \\
    r^i_t(v\mid z)\,\Delta_t \;+\; O(\Delta_t^2) &  z'\in S_1\\
    O(\Delta_t^2) & z'\in S_{\ge 2}
\end{cases}
\end{align*}
Recall $\|\mu-\nu\|_{\mathrm{TV}}=\tfrac12\sum_{z'}|\mu(z')-\nu(z')|$. Then,
\begin{align*}
\sum_{z'\in S_0} \big| k_{t,\Delta_t}(z'\mid z)-q_{t,\Delta_t}(z'\mid z) \big|
&= \big| k_{t,\Delta_t}(z\mid z)-q_{t,\Delta_t}(z\mid z) \big|
\;\le\; C_0\,\Delta_t^2,\\[2pt]
\sum_{z'\in S_1} \big| k_{t,\Delta_t}(z'\mid z)-q_{t,\Delta_t}(z'\mid z) \big|
&\le\; \sum_{i}\sum_{v\neq z^i} C_1\,\Delta_t^2
\;=\; d\,(|\gV|-1)\,C_1\,\Delta_t^2,\\[2pt]
\sum_{z'\in S_{\ge 2}} \big| k_{t,\Delta_t}(z'\mid z)-q_{t,\Delta_t}(z'\mid z) \big|
&\le\; \sum_{z'\in S_{\ge 2}} k_{t,\Delta_t}(z'\mid z) 
\;+\; \sum_{z'\in S_{\ge 2}} q_{t,\Delta_t}(z'\mid z)
\;\le\; C_2\,\Delta_t^2,
\end{align*}
where $C_0,C_1$ arise from the $O(\Delta_t^2)$ remainders in the first-order expansions, and $C_2$ from the multi-flip probabilities. Therefore,
\[
\big\|Q_{t,\Delta_t}(\cdot\mid z)-K_{t,\Delta_t}(\cdot\mid z)\big\|_{\mathrm{TV}}
\;=\;\frac12\sum_{z'} \big| k_{t,\Delta_t}(z'\mid z)-q_{t,\Delta_t}(z'\mid z) \big|
\;\le\; \frac{C_0 + d(|\gV|-1)C_1 + C_2}{2}\;\Delta_t^2
\;\le\; C\,\Delta_t^2,
\]
with $C<\infty$.

\subsection{Proof of Proposition \ref{prop:doob_h_transform}}\label{proof:doob_h_transform}

    We proceed by analyzing the transition kernel of $P^\star$ in an arbitrary interval $[s,t]\subseteq[0,T]$.

    Let
    \begin{equation}
        M_T \coloneq \frac{P^\star(dz_{[0,T]})}{P(dz_{[0,T]})} = \frac{\prod^K_{k=1}G_{\tau_k}(z_{\tau_k})}{\E_P[\prod^K_{k=1}G_{\tau_k}(z_{\tau_k})]},
    \end{equation}
    then, let the \textit{filtered} path measure $P^\star_t$ be a restriction of $P^\star$ to the filtration $\gH_t$, where $0<t<T$, 
    and denote
    \begin{equation}
        \frac{P^\star_t(dz_{[0,t]})}{P_t(dz_{[0,t]})} = M_t.
    \end{equation}
    For an event $B\in \gH_t$, by a simple application of the Radon-Nykodym theorem and the tower property we can write
    \begin{equation}
        P^\star_t(B) = P^\star(B) = \E_{P}[1_B M_T] = \E_P[1_B \E_P[M_T\midd \gH_t]] = \E_{P_t}[1_B \E_{P_t}[M_T\midd \gH_t]],
    \end{equation}
    where the last step follows from $\E_P[M_T\midd \gH_t]$ being measurable with respect to $\gH_t$. Hence,
    \begin{equation}
        M_t =\E_{P_t}[M_T\midd \gH_t]= \frac{1}{\E_P[\prod^K_{k=1}G_{\tau_k}(z_{\tau_k})]}\E_P\left[\prod^K_{k=1}G_{\tau_k}(Z_{\tau_k}) \bigg| \gH_t\right].
    \end{equation}
    By change of measure under conditional expectation, it follows that
    \begin{equation}
        \E_{P^\star}[f(Z_t)\mid Z_s = z] = \frac{\E_{P}[f(Z_t) M_t\mid Z_s = z]}{\E_{P}[M_t\mid Z_s = z]} = \frac{\E_{P}[f(Z_t) h^\star_t(Z_t)\mid Z_s = z]}{h^\star_s(z)},
    \end{equation}
    where $h^\star$ is the look-ahead function in \eqref{eq:lookahead}.
    By definition, we can express the generator $\gL^\star_t$ of $P^\star_t$ as
    \begin{align}
        \gL^\star_t(f)(z) 
        &=\lim_{\Delta_t \rightarrow 0} \frac{\E_{P^\star}[f(Z_{t+\Delta_t})\mid Z_t = z] - f(z)}{\Delta_t} \\
        &= \lim_{\Delta_t \rightarrow 0} \frac{\E_{P}\left[f(Z_{t+\Delta_t}) \frac{h^\star_{t+\Delta_t}(Z_{t+\Delta_t})}{h^\star_t(z)}\mid Z_t = z\right] - f(z)}{\Delta_t} \label{eq:generator_star}
    \end{align}
    Moreover, we can approximate $h^\star_{t+\Delta_t}(z)$ for $t\in[\tau_k,\tau_{k+1}-\Delta_t),\,k\in[1:K]$ using a Taylor expansion around time $t$
    \begin{align}
        h^\star_{t+\Delta_t}(z) &= h^\star_t(z) +\Delta_t \frac{\partial h^\star_t(z)}{\partial t} + o(\Delta_t) \\
        &= h^\star_t(z) -\Delta_t \sum_{i,v\neq z^i} r^i_t(v\midd z) \left[ h^\star_t(z^{i\rightarrow v}) - h^\star_t(z) \right] + o(\Delta_t),  \label{eq:taylor}
    \end{align}
    where the last line follows from Kolmogorov backward equation \citep{norris}:
    \begin{equation*}
        \frac{\partial h^\star_t(z)}{\partial t} = -\mathcal{L}_t (h^\star_t)(z) = -\sum_{i,v\neq z^i} r^i_t(v\midd z) \left[ h^\star_t(z^{i\rightarrow v}) - h^\star_t(z) \right].
    \end{equation*}
    For small $\Delta_t$, we can express $\E_{P}\left[f(Z_{t+\Delta_t}) \frac{h^\star_{t}(Z_{t+\Delta_t})}{h^\star_t(z)}\mid Z_t = z\right]$ using the law of total expectation, where we split the expectation based on the number of jumps in the interval $[t,t+\Delta_t]$. 
    \begin{align}
        &\E_{P}\left[f(Z_{t+\Delta_t}) h^\star_{t}(Z_{t+\Delta_t})\mid Z_t = z\right] \\
        &= f(z)h^\star_t(z)\bigg(1 - \Delta_t \sum_{i,v\neq z^i} r^i_t(v\midd z)\bigg) +  \sum_{i,v\neq z^i} f(z^{i\rightarrow v}) h^\star_t(z^{i\rightarrow v})\Delta_t r^i_t(v\midd z)  + o(\Delta_t) \\
        &= f(z)h^\star_t(z) + \Delta_t \sum_{i,v\neq z^i} r^i_t(v\midd z)  \left[ f(z^{i\rightarrow v}) h^\star_t(z^{i\rightarrow v}) - f(z)h^\star_t(z)\right] + o(\Delta_t).\label{eq:total_exp}
    \end{align}
    Combining \eqref{eq:taylor} and \eqref{eq:total_exp}, we obtain 
    \begin{align}
        &\E_{P}\left[f(Z_{t+\Delta_t}) h^\star_{t+\Delta_t}(Z_{t+\Delta_t})\mid Z_t = z\right] \\
        &= f(z)h^\star_t(z) + \Delta_t \sum_{i,v\neq z^i} r^i_t(v\midd z)h^\star_t(z^{i\rightarrow v})  \left[ f(z^{i\rightarrow v})  - f(z)\right] + o(\Delta_t).  \label{eq:approx_future_fh}
    \end{align}
    Then, plugging \eqref{eq:approx_future_fh} back into \eqref{eq:generator_star}, we get 
    \begin{align}
        \gL^\star_t(f)(z) &= \lim_{\Delta_t \rightarrow 0} \frac{\Delta_t \sum_{i,v\neq z^i} r^i_t(v\midd z)\frac{h^\star_t(z^{i\rightarrow v})}{h^\star_t(z)}  \left[ f(z^{i\rightarrow v})  - f(z)\right] + o(\Delta_t)  }{\Delta_t} \\
        &= \sum_{i,v\neq z^i} r^i_t(v\midd z)\frac{h^\star_t(z^{i\rightarrow v})}{h^\star_t(z)}  \left[ f(z^{i\rightarrow v})  - f(z)\right] \label{eq:fk_ips}
    \end{align}
    By inspection, we recognize in \eqref{eq:fk_ips} the generator of an IPS, with local rates 
    \begin{equation}
        r^{\star,i}_t(v\midd z) \coloneq 
        \begin{cases}
            r^i_t(v\midd z)\frac{h^\star_t(z^{i\rightarrow v})}{h^\star_t(z)}, &v\neq z^i, \\
            -\sum_{u\neq v}r^i_t(u\midd z)\frac{h^\star_t(z^{i\rightarrow u})}{h^\star_t(z)} , & v=z^i.
        \end{cases}
    \end{equation}
    The initial distribution follows from a simple application of the Bayes theorem, and is equal to 
    \begin{equation}
        p_0^\star (z) = \frac{p_0(z) h^\star_0(z)}{\E_{p_0}[h_0(Z_0)]}.
    \end{equation}

\subsection{Fisher's identity} \label{proof:fisher}
Let $\mu$ be a $\theta$-independent dominating measure on the path space $D([0,T],\gZ)$. With a slight abuse of notation, we denote the density of paths under $P_\theta(dZ_{[0,T]})$ as 
$P_\theta(Z_{[0,T]})={P_\theta(dZ_{[0,T]})}/{\mu}(dZ_{[0,T]})$. Moreover, we assume conditions hold to interchange differentiation and integration, see \citet[Thm.~6.28]{klenke2008probability} for details. 

We follow the same steps as \cite{lawson2023x}, adapting them to our setup. This amounts to using the derivative of the logarithm and the log-derivative trick inside the expectation:
\begin{align*}
&\nabla_\theta \mathcal{L}(\theta) \\
     &= \nabla_\theta\log \mathbb{E}_{P_\theta}\Bigg[ \prod_{k=1}^K G_{\tau_k,\theta}(Z_{\tau_k}) \Bigg] \\
    &= \frac{1}{\mathbb{E}_{P_\theta}\big[ \prod_{k=1}^K G_{\tau_k,\theta}(Z_{\tau_k}) \big]} \nabla_\theta\int_{D([0,T],\gZ)}  \left(\prod_{k=1}^K G_{\tau_k,\theta}(Z_{\tau_k})\right) P_\theta(Z_{[0,T]})\mu(dZ_{[0,T]}) \\
    &= \frac{1}{\mathbb{E}_{P_\theta}\big[ \prod_{k=1}^K G_{\tau_k,\theta}(Z_{\tau_k}) \big]} \int_{D([0,T],\gZ)}  \nabla_\theta\left(\left(\prod_{k=1}^K G_{\tau_k,\theta}(Z_{\tau_k})\right) P_\theta(Z_{[0,T]})\right)\mu(dZ_{[0,T]}) \\
    &= \frac{1}{\mathbb{E}_{P_\theta}\big[ \prod_{k=1}^K G_{\tau_k,\theta}(Z_{\tau_k}) \big]} \int_{D([0,T],\gZ)}  \left( \sum_{k=1}^K\nabla_\theta \log G_{\tau_k,\theta}(Z_{\tau_k}) +\nabla_\theta \log P_\theta(Z_{[0,T]})\right)\!\!\left(\prod_{k=1}^K G_{\tau_k,\theta}(Z_{\tau_k})\right) P_\theta(dZ_{[0,T]}) \\
    &= \mathbb{E}_{P^\star_\theta}\left[\sum_{k=1}^K\nabla_\theta \log G_{\tau_k,\theta}(Z_{\tau_k}) +\nabla_\theta \log P_\theta(Z_{[0,T]})\right],
\end{align*}
where 
\begin{equation*}\label{eq:theta_star}
    P^\star_\theta(dZ_{[0,T]}) \;\propto\; 
    \Bigg(\prod_{k=1}^K G_{\tau_k}(Z_{\tau_k})\Bigg) P_\theta(dZ_{[0,T]}).
\end{equation*}

\subsection{Proof of Theorem \ref{thm:ess-final}} \label{proof:ess-final}

\begin{lemma}\label{lem:chisq_bound}
Consider a twisted IPS with local rates $r^{\theta,\psi}_{i,t}(v\mid z) = r^\theta_{i,t}(v\mid z) s^\psi_{i,t}(v,z)$ where $s^\psi_{i,t}(v,z) = \frac{h^\psi_t(z^{i\to v})}{h^\psi_t(z)}$. Let $\bar{p}^{\theta,\psi}_{t,\Delta_t}$ be its true transition kernel and $q^{\theta,\psi}_{t,\Delta_t}$ be the Euler approximation:
\begin{equation*}
q^{\theta,\psi}_{t,\Delta_t}(z\mid z_t) = \prod_{i\in\mathcal{I}}\left(\delta_{z^i, z^i_t} + \Delta_t r^{\theta,\psi}_{i,t}(z^i\mid z_t)\right).
\end{equation*}
Under assumptions (A1)-(A3) for the twisted rates (which hold when $h^\psi$ is Lipschitz continuous in time), there exists $C<\infty$ such that:
\begin{equation*}
\chi^2(\bar{p}^{\theta,\psi}_{t,\Delta_t}(\cdot\mid z_t) \| q^{\theta,\psi}_{t,\Delta_t}(\cdot\mid z_t)) \leq C\Delta_t^2,
\end{equation*}
uniformly in $t\in[0,T]$ and $z_t\in\mathcal{Z}$.
\end{lemma}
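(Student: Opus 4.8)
The plan is to write $\chi^2\big(\bar p^{\theta,\psi}_{t,\Delta_t}(\cdot\mid z_t)\,\big\|\,q^{\theta,\psi}_{t,\Delta_t}(\cdot\mid z_t)\big) = \sum_{z}\big(\bar p^{\theta,\psi}_{t,\Delta_t}(z\mid z_t) - q^{\theta,\psi}_{t,\Delta_t}(z\mid z_t)\big)^2 / q^{\theta,\psi}_{t,\Delta_t}(z\mid z_t)$ and rerun the first-order expansion from the proof of Proposition~\ref{prop:tv-compact}, this time carefully tracking the size of the denominators. First I would verify that (A1)--(A3) transfer to the twisted rates $r^{\theta,\psi}_{i,t}(v\mid z) = r^\theta_{i,t}(v\mid z)\,s^\psi_{i,t}(v,z)$: since $\gZ$ is finite and $h^\psi$ is positive and Lipschitz in $t$ on each interval between consecutive observation times, it is bounded above and below by constants $0 < c_\psi \le C_\psi < \infty$, so each concrete score $s^\psi_{i,t}(v,z) = h^\psi_t(z^{i\to v})/h^\psi_t(z)$ lies in $[c_\psi/C_\psi,\ C_\psi/c_\psi]$; hence the twisted off-diagonal rates inherit a uniform bound (A1), are products of bounded Lipschitz-in-time functions and thus Lipschitz in time (A3), while (A2) is a constraint on the grid we are free to impose.

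Next I would reuse the three-way split $\gZ = S_0 \cup S_1 \cup S_{\ge 2}$ (no flip / single flip / $\ge 2$ flips) together with the uniform $O(\Delta_t^2)$ first-order expansions established for Proposition~\ref{prop:tv-compact}, now applied to the twisted rates: $q^{\theta,\psi}(z_t\mid z_t) = 1 - \Lambda^\psi_t(z_t)\Delta_t + O(\Delta_t^2)$ with $\bar p^{\theta,\psi}$ agreeing to the same order; $q^{\theta,\psi}(z_t^{i\to v}\mid z_t) = \Delta_t\,r^{\theta,\psi}_{i,t}(v\mid z_t) + O(\Delta_t^2)$ with $\bar p^{\theta,\psi}$ agreeing to the same order; and $\sum_{z\in S_{\ge2}}\bar p^{\theta,\psi}(z\mid z_t) = O(\Delta_t^2)$ by Poisson domination under (A1). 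The $S_0$ contribution is then $O(\Delta_t^4)/(1-O(\Delta_t)) = O(\Delta_t^4)$. For a single-flip state with $r^{\theta,\psi}_{i,t}(v\mid z_t) > 0$, the contribution is $O(\Delta_t^4)/\Theta(\Delta_t) = O(\Delta_t^3)$, and summing over the at most $d(V-1)$ such states keeps this $O(\Delta_t^3)$. For $S_{\ge2}$, on a $k$-flip state reachable by the product kernel one has $q^{\theta,\psi} = \Theta(\Delta_t^k)$ and $\bar p^{\theta,\psi} = O(\Delta_t^k)$, so $(\bar p^{\theta,\psi})^2/q^{\theta,\psi} = O(\Delta_t^k)$; summing over the finitely many such states with $k\ge2$ gives $O(\Delta_t^2)$. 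Adding the pieces yields $\chi^2 \le C\Delta_t^2$ with $C$ depending on $d$, $V$, $\bar\lambda$, $L$, and $c_\psi, C_\psi$.

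The main obstacle is exactly what separates $\chi^2$ from the total-variation estimate of Proposition~\ref{prop:tv-compact}: the divergence is sensitive to states where $q^{\theta,\psi}$ assigns little or no mass. On states the product kernel cannot reach --- for instance when $r^\theta_{i,t}(v\mid z_t)=0$ but the true chain reaches $z_t^{i\to v}$ through an intermediate jump (this already occurs in the SIRS example via $I\to R\to S$) --- one has $q^{\theta,\psi}=0$ while $\bar p^{\theta,\psi}=\Theta(\Delta_t^2)$, so the naive sum is infinite. This is harmless for the use made of the lemma, since the incremental ESS in \eqref{eq:ess} is an expectation under $q^{\theta,\psi}_{t,\Delta_t}$ and such states never contribute; equivalently, the bound should be read over $\{z : q^{\theta,\psi}_{t,\Delta_t}(z\mid z_t) > 0\}$, or under the mild non-degeneracy condition that $\bar p^{\theta,\psi}_{t,\Delta_t}(z\mid z_t) > 0$ at Hamming distance $k$ forces $q^{\theta,\psi}_{t,\Delta_t}(z\mid z_t)\gtrsim \Delta_t^k$ (guaranteed, e.g., by a uniform positive lower bound on the nonzero rates). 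Once the sum is restricted this way, the rest is bookkeeping: checking that all first-order remainders are uniform in $(t,z_t)$ and that the per-state denominators are bounded below by the appropriate power of $\Delta_t$.
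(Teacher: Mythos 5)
Your proof follows the same route as the paper's: the same three-way partition $\gZ = S_0 \cup S_1 \cup S_{\geq 2}$, the same first-order expansions inherited from Proposition~\ref{prop:tv-compact}, and the same case-by-case accounting of chi-squared contributions; the transfer of (A1)--(A3) to the twisted rates, which the paper states only parenthetically, is also what your argument relies on.

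The substantive point you raise is a genuine gap in the paper's own argument. When a local rate $r^{\theta,\psi}_{i,t}(v\mid z_t)$ vanishes, the product kernel puts zero mass on $z_t^{i\to v}$ while the exact twisted kernel can still reach it through two or more jumps, so the chi-squared divergence as written is $+\infty$; your SIRS example ($I\to R\to S$ on a single coordinate, giving $\bar p^{\theta,\psi}(z_t^{i\to S}\mid z_t)=\Theta(\Delta_t^2)$ and $q^{\theta,\psi}(z_t^{i\to S}\mid z_t)=0$) is exactly the right counterexample, and the paper's own models have zero rates by construction. The paper's Cases~2 and~3 divide by $q^{\theta,\psi}$ without checking it is positive. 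Your repair --- restrict the sum to $A=\{z:q^{\theta,\psi}_{t,\Delta_t}(z\mid z_t)>0\}$, where a reachable $k$-flip state has $q^{\theta,\psi}\gtrsim\Delta_t^k$ and $\bar p^{\theta,\psi}=O(\Delta_t^k)$ --- is the right one, and it preserves exactly what the lemma must deliver for Theorem~\ref{thm:ess-final}: the incremental ESS is an expectation under $q^{\theta,\psi}$, and one has $\mathbb{E}_q\big[(\bar p/q)^2\big]=1+\sum_{z\in A}(\bar p-q)^2/q - 2\big(1-\sum_{z\in A}\bar p(z)\big)\le 1+\sum_{z\in A}(\bar p-q)^2/q\le 1+C\Delta_t^2$. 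The unconditional identity $\mathbb{E}_q[(\bar p/q)^2]=1+\chi^2(\bar p\,\|\,q)$ invoked in Step~3 of the proof of Theorem~\ref{thm:ess-final} also requires this support caveat, so the fix, while small, touches both the lemma statement and that step.
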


\begin{proof}
Recall that $\chi^2(p \| q) = \sum_z \frac{(p(z) - q(z))^2}{q(z)}$. We partition $\mathcal{Z}$ as in Proposition \ref{prop:tv-compact}:
\begin{itemize}
    \item $S_0 = \{z_t\}$ (no change)
    \item $S_1 = \{z_t^{i\to v}: i\in\mathcal{I}, v\neq z_t^i\}$ (single flips)
    \item $S_{\geq 2} = \{z': |\{i: z'^i \neq z_t^i\}| \geq 2\}$ (multi flips)
\end{itemize}

Let $\lambda^{\theta,\psi}_t(z_t) = \sum_{i,v\neq z_t^i} r^{\theta,\psi}_{i,t}(v\mid z_t)$. From the proof of Proposition \ref{prop:tv-compact}, under (A1)-(A3) we have the first-order expansions:
\begin{equation}\label{eq:expansions}
\bar{p}^{\theta,\psi}_{t,\Delta_t}(z'\mid z_t) = \begin{cases}
1 - \lambda^{\theta,\psi}_t(z_t)\Delta_t + O(\Delta_t^2) & z' \in S_0 \\
r^{\theta,\psi}_{i,t}(v\mid z_t)\Delta_t + O(\Delta_t^2) & z' = z_t^{i\to v} \in S_1 \\
O(\Delta_t^2) & z' \in S_{\geq 2}
\end{cases}
\end{equation}
and
\begin{equation}\label{eq:euler_expansions}
q^{\theta,\psi}_{t,\Delta_t}(z'\mid z_t) = \begin{cases}
1 - \lambda^{\theta,\psi}_t(z_t)\Delta_t + O(\Delta_t^2) & z' \in S_0 \\
r^{\theta,\psi}_{i,t}(v\mid z_t)\Delta_t + O(\Delta_t^2) & z' = z_t^{i\to v} \in S_1 \\
O(\Delta_t^2) & z' \in S_{\geq 2}
\end{cases}
\end{equation}

\paragraph{Case 1: $z' \in S_0$.}
\begin{align*}
\frac{(\bar{p}^{\theta,\psi}(z_t) - q^{\theta,\psi}(z_t))^2}{q^{\theta,\psi}(z_t)} &= \frac{(O(\Delta_t^2))^2}{1 - \lambda^{\theta,\psi}_t(z_t)\Delta_t + O(\Delta_t^2)} = \frac{O(\Delta_t^4)}{1 + O(\Delta_t)} = O(\Delta_t^4).
\end{align*}

\paragraph{Case 2: $z' \in S_1$.}
For $z' = z_t^{i\to v}$:
\begin{align*}
\frac{(\bar{p}^{\theta,\psi}(z') - q^{\theta,\psi}(z'))^2}{q^{\theta,\psi}(z')} &= \frac{(O(\Delta_t^2))^2}{r^{\theta,\psi}_{i,t}(v\mid z_t)\Delta_t + O(\Delta_t^2)} = \frac{O(\Delta_t^4)}{\Theta(\Delta_t)} = O(\Delta_t^3).
\end{align*}
Summing over $S_1$ (which has $|S_1| = d(|\mathcal{V}|-1)$ elements):
\begin{equation*}
\sum_{z' \in S_1} \frac{(\bar{p}(z') - q(z'))^2}{q(z')} = d(|\mathcal{V}|-1) \cdot O(\Delta_t^3) = O(\Delta_t^3).
\end{equation*}

\paragraph{Case 3: $z' \in S_{\geq 2}$.}
\begin{equation*}
\frac{(\bar{p}(z') - q(z'))^2}{q(z')} = \frac{(O(\Delta_t^2))^2}{O(\Delta_t^2)} = O(\Delta_t^2).
\end{equation*}
Since $|S_{\geq 2}| \leq |\mathcal{Z}| < \infty$:
\begin{equation*}
\sum_{z' \in S_{\geq 2}} \frac{(\bar{p}(z') - q(z'))^2}{q(z')} = O(\Delta_t^2).
\end{equation*}

\paragraph{Total chi-squared divergence.}
Combining all cases:
\begin{equation*}
\chi^2(\bar{p}^{\theta,\psi} \| q^{\theta,\psi}) = O(\Delta_t^4) + O(\Delta_t^3) + O(\Delta_t^2) = O(\Delta_t^2).
\end{equation*}
Therefore, there exists $C < \infty$ such that $\chi^2(\bar{p}^{\theta,\psi}_{t,\Delta_t} \| q^{\theta,\psi}_{t,\Delta_t}) \leq C\Delta_t^2$ uniformly in $t$ and $z_t$.
\end{proof}

As the first step in proving Theorem \ref{thm:ess-final}, we decompose the importance weight using auxiliary distributions. Define:
\begin{itemize}
    \item $p^{\theta,\psi}_{t,\Delta_t}(z\mid z_t) \propto p^\theta_{t,\Delta_t}(z\mid z_t) h^\psi_{t+\Delta_t}(z) G_{t+\Delta_t}(z)^{\mathbf{1}_{t+\Delta_t\in\{\tau_k\}}}$ (twisted with approximate lookahead)
    \item $\bar{p}^{\theta,\psi}_{t,\Delta_t}(z\mid z_t) \propto p^\theta_{t,\Delta_t}(z\mid z_t) h^\psi_{(t+\Delta_t)^-}(z)$ (twisted without potential)
\end{itemize}

The importance weight decomposes as:
\begin{equation}\label{eq:iw_decomp}
\frac{p^{\theta,\star}_{t,\Delta_t}(z\mid z_t)}{q^{\theta,\psi}_{t,\Delta_t}(z\mid z_t)} = \frac{p^{\theta,\star}_{t,\Delta_t}(z\mid z_t)}{p^{\theta,\psi}_{t,\Delta_t}(z\mid z_t)} \cdot \frac{p^{\theta,\psi}_{t,\Delta_t}(z\mid z_t)}{\bar{p}^{\theta,\psi}_{t,\Delta_t}(z\mid z_t)} \cdot \frac{\bar{p}^{\theta,\psi}_{t,\Delta_t}(z\mid z_t)}{q^{\theta,\psi}_{t,\Delta_t}(z\mid z_t)}.
\end{equation}

\paragraph{Step 1: Bounding the twist error.}
We have:
\begin{equation*}
\frac{p^{\theta,\star}_{t,\Delta_t}(z\mid z_t)}{p^{\theta,\psi}_{t,\Delta_t}(z\mid z_t)} = \frac{h^\star_{t+\Delta_t}(z)}{h^\psi_{t+\Delta_t}(z)} \cdot \frac{Z^\psi_t(z_t)}{Z^\star_t(z_t)},
\end{equation*}
where 
\begin{align*}
Z^\psi_t(z_t) &= \sum_{z'} p^\theta_{t,\Delta_t}(z'\mid z_t) h^\psi_{t+\Delta_t}(z') G_{t+\Delta_t}(z')^{\mathbf{1}_{t+\Delta_t\in\{\tau_k\}}},\\
Z^\star_t(z_t) &= \sum_{z'} p^\theta_{t,\Delta_t}(z'\mid z_t) h^\star_{t+\Delta_t}(z') G_{t+\Delta_t}(z')^{\mathbf{1}_{t+\Delta_t\in\{\tau_k\}}}.
\end{align*}

By definition of $\varepsilon_{t+\Delta_t}$, for all $z \in \mathcal{Z}$:
\begin{equation*}
e^{-\varepsilon_{t+\Delta_t}} \leq \frac{h^\psi_{t+\Delta_t}(z)}{h^\star_{t+\Delta_t}(z)} \leq e^{\varepsilon_{t+\Delta_t}}.
\end{equation*}

The normalizing constant ratio can be written as an expectation, by multiplying and dividing each summand in the numerator by $h^\star_{t+\Delta_t}(z')$:
\begin{equation}\label{eq:norm_const}
\frac{Z^\psi_t(z_t)}{Z^\star_t(z_t)} = \sum_{z'} \frac{p^\theta_{t,\Delta_t}(z'\mid z_t) h^\star_{t+\Delta_t}(z') G_{t+\Delta_t}(z')^{\mathbf{1}_{t+\Delta_t\in\{\tau_k\}}}}{Z^\star_t(z_t)} \cdot \frac{h^\psi_{t+\Delta_t}(z')}{h^\star_{t+\Delta_t}(z')} = \E_{p^{\theta,\star}_{t,\Delta_t}(\cdot \mid z_t)}\left[ \frac{h^\psi_{t+\Delta_t}(Z)}{h^\star_{t+\Delta_t}(Z)} \right].
\end{equation}
Since the bounds hold pointwise for each $z'$:
\begin{equation*}
e^{-\varepsilon_{t+\Delta_t}} \leq \frac{Z^\psi_t(z_t)}{Z^\star_t(z_t)} \leq e^{\varepsilon_{t+\Delta_t}}.
\end{equation*}

Combining these bounds:
\begin{equation}\label{eq:twist_bound}
e^{-2\varepsilon_{t+\Delta_t}} \leq \frac{p^{\theta,\star}_{t,\Delta_t}(z\mid z_t)}{p^{\theta,\psi}_{t,\Delta_t}(z\mid z_t)} \leq e^{2\varepsilon_{t+\Delta_t}}.
\end{equation}

\paragraph{Step 2: Bounding the reset error.}
We have:
\begin{equation*}
\frac{p^{\theta,\psi}_{t,\Delta_t}(z\mid z_t)}{\bar{p}^{\theta,\psi}_{t,\Delta_t}(z\mid z_t)} = \frac{h^\psi_{t+\Delta_t}(z)}{h^\psi_{(t+\Delta_t)^-}(z)} \cdot G_{t+\Delta_t}(z)^{\mathbf{1}_{t+\Delta_t\in\{\tau_k\}}} \cdot \frac{\bar{Z}^\psi_t(z_t)}{Z^\psi_t(z_t)},
\end{equation*}
where $\bar{Z}^\psi_t(z_t) = \sum_{z'} p^\theta_{t,\Delta_t}(z'\mid z_t) h^\psi_{(t+\Delta_t)^-}(z')$.

When $t+\Delta_t \notin \{\tau_k\}$, $h^\psi$ is continuous, so $h^\psi_{(t+\Delta_t)^-}(z) = h^\psi_{t+\Delta_t}(z)$ and the ratio equals 1.

When $t+\Delta_t \in \{\tau_k\}$, by definition of $\delta_{t+\Delta_t}$:
\begin{equation*}
|\log h^\psi_{(t+\Delta_t)^-}(z) - \log G_{t+\Delta_t}(z) - \log h^\psi_{t+\Delta_t}(z)| \leq \delta_{t+\Delta_t},
\end{equation*}
which gives:
\begin{equation*}
e^{-\delta_{t+\Delta_t}} \leq \frac{h^\psi_{t+\Delta_t}(z)}{h^\psi_{(t+\Delta_t)^-}(z)} \cdot G_{t+\Delta_t}(z) \leq e^{\delta_{t+\Delta_t}}.
\end{equation*}

For the normalizing constant ratio, we can use the exact same procedure of \eqref{eq:norm_const}:
\begin{equation*}
\frac{\bar{Z}^\psi_t(z_t)}{Z^\psi_t(z_t)} = {\mathbb{E}_{{p}^{\theta,\psi}_{t,\Delta_t}(\cdot\mid z_t)}\left[\frac{h^\psi_{(t+\Delta_t)^-}(Z)}{h^\psi_{t+\Delta_t}(Z)G_{t+\Delta_t}(Z)} \right]}.
\end{equation*}
Since the pointwise bound holds for all $z$, it holds for the expectation:
\begin{equation*}
e^{-\delta_{t+\Delta_t}} \leq \frac{\bar{Z}^\psi_t(z_t)}{Z^\psi_t(z_t)} \leq e^{\delta_{t+\Delta_t}}.
\end{equation*}

Combining:
\begin{equation}\label{eq:reset_bound}
e^{-2\delta_{t+\Delta_t}} \leq \frac{p^{\theta,\psi}_{t,\Delta_t}(z\mid z_t)}{\bar{p}^{\theta,\psi}_{t,\Delta_t}(z\mid z_t)} \leq e^{2\delta_{t+\Delta_t}}.
\end{equation}

\paragraph{Step 3: Bounding the discretization error.}
The distribution $\bar{p}^{\theta,\psi}_{t,\Delta_t}$ corresponds to the transition kernel of a twisted IPS with twist function $h^\psi_{(t+\Delta_t)^-}$. The proposal $q^{\theta,\psi}_{t,\Delta_t}$ is its Euler approximation. Between observation times, $h^\psi$ is Lipschitz continuous, so assumptions (A1)-(A3) hold for the twisted rates. By Lemma \ref{lem:chisq_bound}:
\begin{equation}\label{eq:disc_bound}
\mathbb{E}_{q^{\theta,\psi}(\cdot\midd z_t)}\left[\left(\frac{\bar{p}^{\theta,\psi}_{t,\Delta_t}(Z\midd z_t)}{q^{\theta,\psi}_{t,\Delta_t}(Z\midd z_t)}\right)^2\right] = 1 + \chi^2(\bar{p}^{\theta,\psi}_{t,\Delta_t} \| q^{\theta,\psi}_{t,\Delta_t}) \leq 1 + C\Delta_t^2.
\end{equation}

\paragraph{Step 4: Combining the bounds.}
From \eqref{eq:iw_decomp}, squaring and taking expectations:
\begin{align*}
\mathbb{E}_{q^{\theta,\psi}(\cdot\midd z_t)}\left[\left(\frac{p^{\theta,\star}_{t,\Delta_t}(Z\midd z_t)}{q^{\theta,\psi}_{t,\Delta_t}(Z\midd z_t)}\right)^2\right] &= \mathbb{E}_{q^{\theta,\psi}(\cdot\midd z_t)}\left[\left(\frac{p^{\theta,\star}(Z\midd z_t)}{p^{\theta,\psi}(Z\midd z_t)}\right)^2 \left(\frac{p^{\theta,\psi}(Z\midd z_t)}{\bar{p}^{\theta,\psi}(Z\midd z_t)}\right)^2 \left(\frac{\bar{p}^{\theta,\psi}(Z\midd z_t)}{q^{\theta,\psi}(Z\midd z_t)}\right)^2\right] \\
&\leq e^{4\varepsilon_{t+\Delta_t}} \cdot e^{4\delta_{t+\Delta_t}} \cdot (1 + C\Delta_t^2) \\
&= e^{4\varepsilon_{t+\Delta_t} + 4\delta_{t+\Delta_t}} (1 + C\Delta_t^2),
\end{align*}
where we used that the first two ratios are bounded uniformly by \eqref{eq:twist_bound} and \eqref{eq:reset_bound}.

Therefore:
\begin{equation*}
\mathrm{ESS}_{t,\Delta_t}(z_t) = \mathbb{E}_{q^{\theta,\psi}(\cdot\midd z_t)}\left[\left(\frac{p^{\theta,\star}_{t,\Delta_t}(Z\midd z_t)}{q^{\theta,\psi}_{t,\Delta_t}(Z\midd z_t)}\right)^2\right]^{-1} \geq \frac{\exp({-4(\varepsilon_{t+\Delta_t} +\delta_{t+\Delta_t}))}}{1 + C\Delta_t^2}.
\end{equation*}

This bound holds uniformly in $z_t$ since $\varepsilon_{t+\Delta_t}$ and $\delta_{t+\Delta_t}$ are defined as suprema over all states, and the constant from Lemma \ref{lem:chisq_bound} is uniform.

\subsection{Wake-sleep objective}\label{proof:ws}

We discuss here the objectives used in Section~\ref{sec:twist_learning}. Fix an observed sequence $y_{1:K}$, and let $P^\star_\theta(\cdot \mid y_{1:K})$ denote the posterior path measure from \eqref{eq:posterior_path_measure}. We write $P^\psi_\theta(\cdot \mid y_{1:K})$ for the approximate posterior IPS with initial distribution $q^\psi_0(\cdot \mid y_{1:K})$ and local rates
\[
r^{\theta,\psi}_{i,t}(v\mid z) = r^\theta_{i,t}(v\mid z)\,s^\psi_{i,t}(v,z),
\qquad
s^\psi_{i,t}(v,z) \coloneq \frac{h^\psi_t(z^{i\to v})}{h^\psi_t(z)}.
\]

\paragraph{Continuous-time forward KL.}
Let $z_{[0,T]}$ be a càdlàg trajectory with jump times $0<u_1<\cdots<u_N<T$, and let $i_n$ denote the coordinate that changes at time $u_n$. Using the standard density of a CTMC path measure, the log-density of $P^\psi_\theta(\cdot \mid y_{1:K})$ is
\begin{align}
\log P^\psi_\theta(z_{[0,T]}\mid y_{1:K})
=
\log q^\psi_0(z_0\mid y_{1:K})
+
\sum_{n=1}^N
\log r^{\theta,\psi}_{i_n,u_n}(z^{i_n}_{u_n}\mid z_{u_n^-})
+
\int_0^T \sum_{i\in\gI} r^{\theta,\psi}_{i,t}(z^i_t\mid z_t)\,dt,
\label{eq:log_path_q}
\end{align}
where, as in the main text, $r^{\theta,\psi}_{i,t}(z^i\mid z)$ denotes the diagonal entry $r^{\theta,\psi}_{i,t}(z^i\mid z) = -\sum_{v\neq z^i} r^{\theta,\psi}_{i,t}(v\mid z)$.
We study the objective
\begin{align}
D_{\mathrm{KL}}\!\left(P^\star_\theta(\cdot\mid y_{1:K})\,\middle\|\,P^\psi_\theta(\cdot\mid y_{1:K})\right)
&=
C(\theta,y_{1:K})
-
\mathbb{E}_{P^\star_\theta}\!\left[\log P^\psi_\theta(Z_{[0,T]}\mid y_{1:K})\right],
\label{eq:kl_expand_1}
\end{align}
where $C(\theta,y_{1:K})$ collects all terms independent of $\psi$. 
Using
\[
\log r^{\theta,\psi}_{i,u}(Z^i_u\mid Z_{u^-})
=
\log r^\theta_{i,u}(Z^i_u\mid Z_{u^-})
+
\log s^\psi_{i,u}(Z^i_u,Z_{u^-}),
\]
we can absorb the $\log r^\theta_{i,u}$ term into the constant when optimizing with respect to $\psi$, yielding the continuous-time sleep objective
\begin{align}
\mathcal{L}^{\mathrm{cont}}_{\mathrm{s}}(\psi; y_{1:K},\theta)
&\coloneq
-\mathbb{E}_{P^\star_\theta}\!\Bigg[
\log q^\psi_0(Z_0\mid y_{1:K})
+
\int_0^T \sum_{i\in\gI} r^{\theta,\psi}_{i,t}(Z^i_t\mid Z_t)\,dt+
\sum_{i\in\gI}\sum_{u:Z^i_u\neq Z^i_{u^-}}
\log s^\psi_{i,u}(Z^i_u,Z_{u^-})
\Bigg].
\label{eq:cont_sleep_obj}
\end{align}
Equivalently, up to an additive constant and an overall sign convention, this is the quantity reported in \eqref{eq:fwd_kl}.

\paragraph{Time discretization.}
Let $0=t_0<t_1<\cdots<t_M=T$ be a grid containing $\{\tau_k\}_{k=1}^K$, and write $\Delta_{t_m}=t_m-t_{m-1}$. Using a Riemann approximation for the integral term in \eqref{eq:cont_sleep_obj}, and replacing the jump sum by the corresponding coordinate changes on the grid, we obtain
\begin{align}
\mathcal{L}_{\mathrm{s}}(\psi;\,z_{t_0:t_M},y_{1:K},\theta)
\!=\!
-\log q^\psi_0(z_0\mid y_{1:K})\!
-\!\!
\sum_{m=0}^{M-1}\sum_{i\in\gI}
\Bigl[
\Delta_{t_{m+1}}\,r^{\theta,\psi}_{i,t_m}(z^i_{t_m}\mid z_{t_m})
+
\boldsymbol{1}[z^i_{t_m}\neq z^i_{t_{m+1}}]
\log s^\psi_{i,t_m}(z^i_{t_{m+1}},z_{t_m})
\Bigr],
\label{eq:appendix_sleep_disc}
\end{align}
which is precisely the objective in \eqref{eq:approx_fwd_kl}. In practice, we estimate the expectation of \eqref{eq:appendix_sleep_disc} under the joint law of $(Z_{[0,T]},Y_{1:K})$ by ancestral sampling from the prior dynamics and emission model.

\paragraph{Wake phase.}
For $\theta$, we optimize the marginal likelihood \eqref{eq:latent_ips_likelihood}. By Fisher's identity,
\begin{align}
\nabla_\theta \mathcal{L}(\theta)
=
\mathbb{E}_{P^\star_\theta(\cdot\mid y_{1:K})}\!\left[
\sum_{k=1}^K \nabla_\theta \log G_{\tau_k,\theta}(Z_{\tau_k})
+
\nabla_\theta \log P_\theta(Z_{[0,T]})
\right].
\label{eq:fisher_appendix}
\end{align}
To obtain a tractable estimator, we discretize the prior path density using the same Euler grid as above. The resulting approximation is
\begin{align}
\log \hat P_\theta(z_{[0,T]})
&=
\log p^\theta_0(z_0)
+
\sum_{m=0}^{M-1}\sum_{i\in\gI}
\Bigl[
\Delta_{t_{m+1}}\,r^\theta_{i,t_m}(z^i_{t_m}\mid z_{t_m})
+
\boldsymbol{1}[z^i_{t_m}\neq z^i_{t_{m+1}}]
\log r^\theta_{i,t_m}(z^i_{t_{m+1}}\mid z_{t_m})
\Bigr],
\label{eq:appendix_disc_path}\end{align}
which matches \eqref{eq:disc_path}. Substituting \eqref{eq:appendix_disc_path} into \eqref{eq:fisher_appendix}, and approximating the posterior expectation by weighted trajectories produced by tSMC, gives the wake estimator
\begin{align}
\widehat{\nabla_\theta \mathcal{L}_{\mathrm{w}}}
\big(\theta;\,\{z^{(s)}_{t_0:t_M},\bar w_T^{(s)}\}_{s=1}^S,y_{1:K}\big)
=
\sum_{s=1}^S \bar w_T^{(s)}
\left[
\sum_{k=1}^K \nabla_\theta \log G_{\tau_k,\theta}(z^{(s)}_{\tau_k})
+
\nabla_\theta \log \hat P_\theta(z^{(s)}_{[0,T]})
\right],
\label{eq:appendix_wake_estimator}
\end{align}
which is the estimator reported in \eqref{eq:approx_mle}. Observe that, when optimizing $\theta$, only the emission terms and the prior path density contribute to the gradient; the twist affects the wake phase only through the quality of the particle approximation to the posterior trajectories.

\section{Algorithm}\label{appdx:algo}
We provide detailed pseudocode of our training scheme using TwistNet and the KL loss in Algorithm \ref{algo:wake_sleep_detailed}. Note that Algorithm \ref{algo:wake_sleep_detailed} runs tSMC as a subroutine, which we describe in detail in Algorithm \ref{algo:tsmc_short}. 

Let the cost of a forward pass of the rates, the score, and the potentials be $C_r,C_s,C_G$, respectively. Then, the time complexity of an update for a single datapoint in the sleep phase is
\begin{equation*}
    \Theta\big(\ \underbrace{M C_r +K C_G}_{\text{Simulation}}\ +\   \underbrace{M C_s}_{\text{Loss}}\ \big),
\end{equation*}
while in the wake phase it is
\begin{equation*}
    \Theta\big(\ \underbrace{\overbrace{S(M (C_r+C_s) +K C_G)}^{\text{Simulation}} +\!\!\!\!\!   \overbrace{M S }^{\text{Resampling}} \!\!\!\!}_{\text{tSMC}}\ +\   \underbrace{M C_r }_{\text{Loss}}\ \big),
\end{equation*}
where the cost of computing the twist is absorbed into that of computing the score, since we are using the efficient parameterization described in Section \ref{sec:efficient}. Using this parameterization, rather than having to compute $d\,(|\gV| -1) + 1$ forward passes of a twist model, we have a cost that is roughly $\Theta(C_\Phi + (d\,(|\gV| -1) + 1)C_\rho)$, where $C_\Phi$ and $C_\rho$ are the cost of the context encoder and the aggregator in \eqref{eq:rho}. In our experiments, we let $\Phi$ bear the cost of heavy operations such as processing future observations, observation times, as well as covariates and positional information, while $\rho$ is a simple two-layer MLP.

The time and memory cost of the loss terms can be reduced to $\Theta(C_s)$ and $\Theta(C_r )$ by employing a Monte Carlo approximation of time, only considering a single timestep for each update. This is particularly useful when large neural networks are employed in parameterizing either the rates or the score.

A few practical notes:
\begin{enumerate}[label=(\roman*)]
\item To amortize simulation cost, after sampling a mini-batch of trajectories (sleep) or running tSMC (wake) we perform several optimizer steps on the \emph{same} batch before regenerating trajectories.

\item In our experiments we resample at every step, hence particle weights are $1/S$. Due to compute constraints we use relatively few particles (dictated by high dimensionality), and often observe particle collapse. To reduce memory cost, in the wake phase we do not weight the loss across particles. Instead, we draw a single path $z^{(s^\star)}_{[t_0:t_M]}$ by importance resampling using the final normalized weights and compute the wake loss on that path only. This yields a consistent estimator under importance resampling and is effectively equivalent to using self-normalized importance weights \citep{chopin_introduction_2020}.
\item Resampling is performed using systematic resampling \citep{chopin_introduction_2020}.

\end{enumerate}

\begin{algorithm}[!t]
\small
\caption{Wake--Sleep with twisted SMC for latent IPSs}
\label{algo:wake_sleep_detailed}
\begin{algorithmic}[1]
\STATE \textbf{Inputs:} $\mathcal D_{\mathrm{train}}=\{(y^{(b)}_{1:K},\tau^{(b)}_{1:K})\}$; optimizer steps $\textsc{GradStep}_\psi, \textsc{GradStep}_\theta$; time grid $0=t_0<\dots<t_M=T$; \# of particles $S$; batch size $B$; global updates $G$; updates per phase $N$; initializations $\psi_0,\,\theta_0$; Monte Carlo loss flag $\texttt{mc\_loss}$
\STATE $\psi \leftarrow \psi_0, \theta \leftarrow \theta_0$.
\FOR{$g=1,\dots,G$} 
  \STATE \textcolor{gray}{\textit{{\# Sleep phase}}}
  \FOR{$n=1,\dots,N$}
    \FOR{$b=1,\dots,B$}
    \STATE Simulate $z^{(b)}_{[t_0:t_M]}$ from the prior $P_\theta$ via Euler steps.
    \STATE Simulate synthetic observations $\tilde y^{(b)}_k\sim p_\theta(\cdot\mid z^{(b)}_{\tau_k})$, for $k=1,\dots,K$.
    \IF{$\texttt{mc\_loss}$}
    \STATE $m\sim \mathcal{U}(\{0,\dots,M-1\})$
    \STATE $\displaystyle 
      \ell_{\mathrm{sleep}}^{(b)}
=
-\log q^\psi_0(z_0^{(b)}\mid \tilde y^{(b)}_{1:K})
-
M\sum_{i\in\gI}
\Big[
\Delta_{t_{m+1}}\,r^{\theta,\psi}_{i,t_m}(z^{i,(b)}_{t_m}\mid z^{(b)}_{t_m})
+
\mathbf 1[z^{i,(b)}_{t_m}\neq z^{i,(b)}_{t_{m+1}}]
\log s^\psi_{i,t_m}(z^{i,(b)}_{t_{m+1}},z^{(b)}_{t_m})
\Big]
      $
    \ELSE
    \STATE $\displaystyle 
      \ell_{\mathrm{sleep}}^{(b)}(\psi)
=
-\log q^\psi_0(z_0^{(b)}\mid \tilde y^{(b)}_{1:K})
-
\sum_{m=0}^{M-1}\sum_{i\in\gI}
\Big[
\Delta_{t_{m+1}}\,r^{\theta,\psi}_{i,t_m}(z^{i,(b)}_{t_m}\mid z^{(b)}_{t_m})
+
\mathbf 1[z^{i,(b)}_{t_m}\neq z^{i,(b)}_{t_{m+1}}]
\log s^\psi_{i,t_m}(z^{i,(b)}_{t_{m+1}},z^{(b)}_{t_m})
\Big].$
      \ENDIF
    \ENDFOR
    \STATE $\psi \leftarrow \textsc{GradStep}_\psi\left(\nabla_\psi \frac{1}{B}\sum_{b=1}^B \ell_{\mathrm{sleep}}^{(b)}(\psi)\right)$
  \ENDFOR
  \STATE $\bar \theta \leftarrow \theta$ \textcolor{gray}{\textit{{\# Lagged $\theta$ to use for tSMC proposal}}}
    \STATE \textcolor{gray}{\textit{{\# Wake phase}}}
  \FOR{$n=1,\dots,N$}
    \FOR{$b=1,\dots,B$}
      \STATE $(y^{(b)}_{1:K},\tau^{(b)}_{1:K}) \sim \mathcal D_{\mathrm{train}}$
      \STATE Simulate approx. posterior via $\{\!z^{(b,s)}_{[t_0:t_M]},\,\bar w^{(b,s)}\!\}_{s=1}^S \leftarrow \text{tSMC}\left( y^{(b)}_{1:K},\tau^{(b)}_{1:K}\right)$ with prior dynamics $P_\theta$, potentials $G_{\cdot,\theta}$, twist function $h^\psi$ and proposal $q^{\bar\theta,\psi}$ from \eqref{eq:twist_approx_kernel} (see Algorithm \ref{algo:tsmc_short}).
      \STATE $z^{(b)}_{[t_0:t_M]}\leftarrow z^{(b, s^\star)}_{[t_0:t_M]}$, where $s^\star\sim \text{Categorical}\left(\left\{ \bar w^{(b,s)}\right\}^{S}_{s=1}\right)$
    \IF{$\texttt{mc\_loss}$}
    \STATE $m\sim \mathcal{U}(\{0,\dots,M-1\}), \,k\sim \mathcal{U}(\{1,\dots,K\})$
    \STATE $\displaystyle 
      \ell_{\mathrm{wake}}^{(b)}(\theta)
=
-\log p^\theta_0(z_0^{(b)})
-
M\sum_{i\in\gI}
\Big[
\Delta_{t_{m+1}}\,r^\theta_{i,t_m}(z^{i,(b)}_{t_m}\mid z^{(b)}_{t_m})
+
\mathbf 1[z^{i,(b)}_{t_m}\neq z^{i,(b)}_{t_{m+1}}]
\log r^\theta_{i,t_m}(z^{i,(b)}_{t_{m+1}}\mid z^{(b)}_{t_m})
\Big]
-
K \log G_{\tau_k,\theta}(z^{(b)}_{\tau_k}).$
    \ELSE
    \STATE $\displaystyle 
      \ell_{\mathrm{wake}}^{(b)}(\theta)
=
-\log p^\theta_0(z_0^{(b)})
-
\sum_{m=0}^{M-1}\sum_{i\in\gI}
\Big[
\Delta_{t_{m+1}}\,r^\theta_{i,t_m}(z^{i,(b)}_{t_m}\mid z^{(b)}_{t_m})
+
\mathbf 1[z^{i,(b)}_{t_m}\neq z^{i,(b)}_{t_{m+1}}]
\log r^\theta_{i,t_m}(z^{i,(b)}_{t_{m+1}}\mid z^{(b)}_{t_m})
\Big]
-
\sum_{k=1}^K \log G_{\tau_k,\theta}(z^{(b)}_{\tau_k}).$
      \ENDIF
    \ENDFOR
    \STATE $\theta \leftarrow \textsc{GradStep}_\theta\!\left(\nabla_\theta \frac{1}{B}\sum_{b=1}^B \ell_{\mathrm{wake}}^{(b)}(\theta)\right)$
  \ENDFOR
\ENDFOR
\end{algorithmic}
\end{algorithm}

\begin{algorithm}[t]
\small
\caption{tSMC}
\label{algo:tsmc_short}
\begin{algorithmic}[1]
\STATE \textbf{Inputs:} $(y_{1:K},\tau_{1:K})$; time grid $0=t_0<\dots<t_M=T$; \# of particles $S$; prior kernel $p^{\theta}$ and initial distribution $p_0^\theta$; potential function $G_{\cdot,\theta}$; twist function $h^\psi$; proposal kernel $q^{\bar\theta,\psi}$ and initial distribution $q^\psi_0$; ESS threshold $\tau_{\mathrm{ESS}}$
\STATE \textbf{Output:} $\{z^{(s)}_{[t_0:t_M]},\,\bar w^{(s)}\}_{s=1}^S$
\STATE Initialize 
\begin{equation*}
    z^{(s)}_{t_0}\sim q^\psi_0,\quad w^{(s)}_0 \leftarrow \frac{p_0^\theta(z_0^{(s)})h^\psi_0(z_0^{(s)})G_{0,\theta}(z_0^{(s)})^{\boldsymbol{1}_{\{\tau_k\}}(0)}}{q^\psi_0(z_0^{(s)})}, \quad \text{for} \quad s=1,\dots,S.
\end{equation*}
\FOR{$m=0,\dots,M-1$}
\FOR{$s=1,\dots,S$}
\IF{$\mathrm{ESS}(\{ w^{(s)}_{m}\}) < \tau_{\mathrm{ESS}}$}
\STATE Resample $z_{t_0:t_m}^{(s)} \leftarrow z_{t_0:t_m}^{(a_s)}$ using ancestor $a_s\sim \text{Categorical}\left( \left\{ w^{(s)}_{m}\big/\sum_{u=1}^S w^{(u)}_{m}\right\} \right)$.
\STATE Reset weights to $w^{(s)}_{m} \leftarrow 1$
\ENDIF
\ENDFOR
\FOR{$s=1,\dots,S$}
  \STATE Propose $z^{(s)}_{t_{m+1}} \sim q^{\bar\theta,\psi}_{t_m,\Delta t}(\cdot\mid z^{(s)}_{t_m})$
  \STATE Update weights 
  \begin{equation*}
  w^{(s)}_{m+1} \leftarrow w^{(s)}_{m}\times \frac{p^{\theta}_{t_m,\Delta t}(z^{(s)}_{t_{m+1}}\mid z^{(s)}_{t_m})}{q^{\bar\theta,\psi}_{t_m,\Delta t}(z^{(s)}_{t_{m+1}}\mid z^{(s)}_{t_m})}    \times \frac{h^\psi_{t_{m+1}}(z^{(s)}_{t_{m+1}})}{h^\psi_{t_{m}}(z^{(s)}_{t_m})} \times G_{t_{m+1},\theta}(z_{t_{m+1}}^{(s)})^{\boldsymbol{1}_{\{\tau_k\}}(t_{m+1})}
  \end{equation*}
  \ENDFOR
\ENDFOR
\STATE \textbf{return} paths $\{z^{(s)}_{[t_0:t_M]}\}$ and final normalized weights $\{ w^{(s)}_{M}\big/\sum_{u=1}^S w^{(u)}_{M}\}$
\end{algorithmic}
\end{algorithm}

\section{Experimental details}\label{appdx:exp_details}
All code for the experiments can be found at the following link: \href{https://github.com/giosueio/LatentIPS}{\textcolor{teal}{\underline{https://github.com/giosueio/LatentIPS}}}.
\subsection{Baselines}\label{base:details}
We compare our method against alternatives that are scalable to high-dimensional systems, amenable to gradient-based optimization of neural models for the dynamics of the model, and do not require expensive operations such as backpropagation through time. This rules out most approaches based on a reverse KL objective, such as \cite{naesseth_variational_2017, lawson2022sixo}. We note that the NeuralMJP method in \cite{seifner2023neural} could, in principle, ease the memory cost of backpropagating through time by using the adjoint method \citep{chen2018neural}. However, we failed to have this method converge to meaningful solutions in our setup, due to exploding gradients -- possibly coupled with those gradients being biased, due to the repeated Gumbel-softmax approximations \citep{seifner2023neural, jang2017categorical}. We note that in \cite{seifner2023neural} high-dimensional, interacting systems such as the ones considered in this work were not addressed. We also exclude methods that are not amenable to amortization, as this would make inference for neural models much more complex. This rules out the methods proposed in \cite{opper2007variational, wildner2019moment, kohs2021variational, eichentropic}.

The baselines we consider are:
\begin{itemize}
    \item \textbf{Bootstrap particle filter} (BPF)\citep{doucet2009tutorial}: this method is an SMC algorithm using the filtering distributions as intermediate targets, and the prior transition probabilities as a proposal distribution.
    \item \textbf{NASMC and NAS-X }\citep{gu2015neural,lawson2023x}: these are both SMC algorithms with an informative proposal learned with a forward KL loss. In \cite{gu2015neural} the intermediate targets are the filtering distributions, while in \cite{lawson2023x} the intermediate targets are the twisted distributions, where the twist function is learned using a density ratio estimation loss. We parameterize the proposal distribution by fitting a score network minimizing the forward KL loss in \ref{eq:approx_fwd_kl},
    \begin{equation*}
        \log\frac{h_t(z^{i\rightarrow v})}{h_t(z)} \approx s^\psi_t(z)_{[i,v]},
    \end{equation*}
    where $s^\psi_t: \gZ \rightarrow \R^{d\times V} $ and $s^\psi_t(z)_{[i,z^i]} = 1$ for $i \in \gI$.
    
    \item \textbf{Taylor-approximated guidance} (\textsc{TAG}):
to overcome the inefficiency of computing $d\times(V-1)+1$ forward passes of the twist function, \cite{nisonoff_unlocking_2025} proposed to compute a first-order Taylor approximation of the log-twist evaluated at a specific value $z$, i.e. 
\begin{equation}\label{eq:tag}
    \log h^\psi_t(z_t)\approx \log h^\psi_t(\mathbf{z}) + \mathbf{z}_t^\top \nabla_\mathbf{z}\log h^\psi_t(\mathbf{z})
\end{equation}
where $\mathbf{z},\mathbf{z}_t$ are one-hot encoded versions of $z, z_t$, enabling a single forward pass
at $z$ of the twist function. Note that backpropagation of the KL loss with respect to the score function implied by \eqref{eq:tag} would require computing a second derivative, on top of the spatial derivative with respect to $\mathbf{z}$. This can be extremely expensive for neural models, therefore we only consider a DRE loss for this method.
\end{itemize}
For all of our methods (except BPF), we also train a variational initial distribution $q_0^\psi(z_0)$ with a forward KL loss on generated trajectories (see the full objective in \eqref{eq:approx_fwd_kl}).

The density ratio estimation (DRE) loss \citep{lawson2022sixo, lawson2023x} used to learn the twist in NAS-X and TAG is
\begin{equation}
    \hat{\mathcal{L}}_\text{DRE}(\psi) = \sum_{t\in \mathcal{T}} \sum_{i\in \gI} \log \sigma(\log h_t^\psi(z^+_t;y_{\geq t},\tau_{\geq t})) + \log(1-\sigma(\log h_t^\psi(z^-_t;y_{\geq t},\tau_{\geq t}))),
\end{equation}
where $\sigma:\R\rightarrow [0,1]$ is the logistic function. Positive samples $z^+_t$ are generated by the forward model using ancestral sampling of $z^+_{[0,T]}\sim P_\theta$ first, and then $y_{1:K}, \tau_{1:K}$. Negative samples $z^-_t \sim P_\theta$, and are hence uncoupled from $y_{1:K}, \tau_{1:K}$. Using this loss is equivalent to training a classifier to distinguish between coupled and uncoupled samples. 

All of the methods were trained on a single NVIDIA RTX A5000 with 24GB of VRAM.

\subsection{SIRS model}\label{sirs:details}
\paragraph{Data generation.}
\label{appdx:data}
We simulate SIRS epidemics on undirected graphs with $d$ nodes. Graphs are sampled from an expected-degree model using \texttt{networkx.expected\_degree\_graph} \citep{SciPyProceedings_11}, and all nodes have expected degree 5. Each node $i$ has a feature vector $\xi_i\in\R^{16}$, included to make posterior inference more challenging. Ground truth paths on $[0,T]=[0,10]$ are drawn using Gillespie’s algorithm \citep{gillespie1977exact, wilkinson2018stochastic}, with rate parameters in \eqref{eq:sirs_rates} fixed to $(\alpha_0,\alpha_1,\beta,\gamma)=(0.1,\,1.0,\,0.4,\,0.05)$. We assign 50 trajectories for the training set, and 50 to the test set.

\paragraph{Observations model.}
\label{appdx:obs_target} Observations are sampled at $K=10$ snapshot times $\tau_1<\dots<\tau_K$, sampled from a uniform distribution in $[0,10]$ for each trajectory. Observations are conditionally independent across nodes, given $Z_{\tau_k}$. For each node $i\in\gI$ we include an explicit mask token $\varnothing$ and use the node-factorized emission distribution
\[
  p\!\left(y_{\tau_k}\mid Z_{\tau_k}=z\right)
  \;=\; \prod_{i\in\gI} g\!\left(y^i_{\tau_k}\mid z^i\right),
\]
with a masking probability set to $p_{\mathrm{mask}}=\frac{1}{2}$ and small symmetric label noise $\delta>0$, for numerical stability:
\[
  g(c\mid z^i)\;=\;
  \begin{cases}
    p_{\mathrm{mask}}, & c=\varnothing,\\[4pt]
    (1-p_{\mathrm{mask}})\!\left[(1-\delta(V-1))\,\mathbf{1}\{c=z^i\}+\delta\,\mathbf{1}\{c\neq z^i\}\right], & c\in\gV,
  \end{cases}
\]
where $V=\abs{\gV}$. 

\paragraph{Twist model.} We parameterize the score networks and the encoder of the TwistNet with a Graph Transformer (GT) from \cite{vignac2022digress}. For all of our models we use 2 GT layers and 4 heads, with a node embedding dimension of 64, an edge embedding dimension of 8, and global information embedding of 32, for a total of 485,827 parameters. We did not tune any of these hyperparameters and we keep them fixed throughout our experiments. For any time $t\in[0,T]$, we feed as input the feature vector, future observation $y_{\geq t}$ and observation times $\tau_{\geq t}$, and graph statistics computed on the adjacency matrix as in \citep{vignac2022digress}. The score network also takes as input the current state $z_t$, while in the TwistNet this is only considered when aggregating encoder outputs. For the TwistNet, we let the last layer be a two-layer MLP with $m=64$. 
\begin{wrapfigure}{!tr}{0.44\textwidth}
    \includegraphics[width=0.99\linewidth]{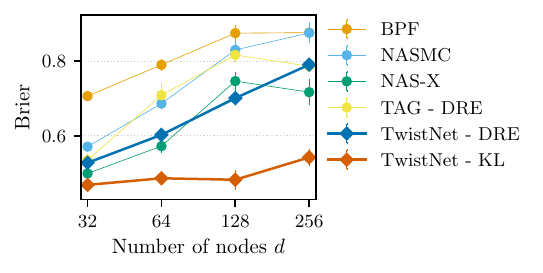}
    \caption{Latent trajectory reconstruction of each inference method, measured by Brier score of true latent trajectories with respect to the posterior approximations. Methods using TwistNet are highlighted by a thicker line. Error bars correspond to two standard errors.}
    \label{fig:brier}
\end{wrapfigure}
\paragraph{Training.} For optimization of the TwistNet and score models, we use the Adam optimizer \citep{kingma2014adam} with learning rate $0.001$ in the latent trajectory inference task, and $0.0003$ for parameter learning. Parameters of the rates in \eqref{eq:sirs_rates} are optimized using Adam with learning rate $0.005$.
All of the other hyperparameters are set to their default in PyTorch \footnote{\texttt{https://docs.pytorch.org/docs/stable/generated/torch.optim.Adam.html}}. We did not tune these values.
For our latent trajectory inference experiment, we trained using the forward KL loss on the twist and score models for 1,000 steps, with a batch size of 32 and $\Delta_t=0.1$. We note that all of the methods converged. Covariates $\xi$ to sample from the prior in the sleep phase were sampled from the test set. For the parameter learning task, we initialize all parameters of the rates at $0.2$. We then train with batch size $16$, using $\Delta_t=0.05$, $S=10$ tSMC particles for each sample in the wake phase, and we resample at every step (i.e., $\tau_\text{ESS}=1$ in Algorithm \ref{algo:tsmc_short}). We found it helpful to begin the training loop by optimizing $\psi$ with sleep steps (see Algorithm \ref{algo:wake_sleep_detailed}) until convergence, which could be achieved in roughly 2,500 steps, and then alternating wake and sleep steps. We reuse each simulated mini-batch for multiple inner updates to reduce simulator calls, using $25$ optimizer steps per batch before resampling fresh trajectories and employing a Monte Carlo approximation of the time summation in the objective. We use 25 steps in each phase for 25 global iterations, i.e. for a total of 625 steps (not counting the repeated steps for each batch). In the wake phase we compute the loss on a \emph{single} trajectory sampled by importance resampling from the final particle weights, rather than a weight-averaged objective, as explained in Section \ref{appdx:algo}.

\paragraph{Evaluation.} In our \textit{latent trajectory inference} experiment we are interested in understanding whether our method can be used to perform posterior inference given a set of observations and a prescribed forward model, with no parameter learning. From tSMC particles we form per-time nodewise marginals $\hat p_t(z) = \frac{1}{S}\sum_{s=1}^S\boldsymbol{1}(z_{t}^{(s)} =z)$, where weights are uniform because we resampled at every step (i.e. we let $\tau_\text{ESS}=1.$ in Algorithm \ref{algo:tsmc_short}).
For each method we consider 25 particles ($S=25$), except for BPF for which we take 250 particles.
We also add a small uniform weight to avoid numerical issues when support is scarce:
\[
  \tilde p_t \;=\; (1-\epsilon)\,\hat p_t \;+\; \epsilon \,\mathrm{Unif}(\gV).
\]
Let $z^\star_t\in\gZ$ be the ground-truth state of the latent trajectory at time $t \in [0,T]$. We report an average of the following metrics over a test set of 50 trajectories:
\[
  \text{CE} \!= \!-\frac{1}{M}\!\sum_{m=1}^M \log \tilde p_{t_m}(z^\star_{t_m}),\quad\!
  \text{Brier} \!=\! \frac{1}{M}\!\sum_{m=1}^M \|\tilde p_{t_m} - \mathbf{z}_{t_m}^\star \|_2^2.
\]
where $0=t_1<\dots<t_M=T$ is the set of discretized time indices, and $\mathbf{z}_t^\star$ is the one-hot encoded $z^\star_t$. The CE loss over dimensions is displayed in Figure \ref{fig:nll_vs_dim}, and the Brier score in Figure \ref{fig:brier}. 

In our \textit{parameter learning} experiment, we keep track of individual parameter estimates and total relative parameter error $\sum_j{|\hat \theta_j- \theta_j|}/{|\theta_j|}$ over the four parameters in equation \ref{eq:sirs_rates}.

\subsection{Wildfires trajectories}\label{wf:details}
\paragraph{Dataset.} As mentioned in Section \ref{sec:wildfires}, we consider a subset of the trajectories in the WildfireSpreadTS dataset \citep{gerard2023wildfirespreadts}. We filter them based on the following criteria:
\begin{itemize}
    \item We take a $64\times 64$ crop at the center of the image at the day corresponding to the starting date in GlobFire \citep{artes2019global}, and consider trajectories that are a single week long. Note that images are created based on the final shape of the fire \citep{gerard2023wildfirespreadts}, hence the center point does not necessarily correspond to the point of ignition.
    \item Since many of these crops do not contain any active fire pixel, we filter trajectories so that the $64\times 64$ grid at the first day contains at least one fire pixel, and there are at least two of the other six days with at least one pixel of active fire.
\end{itemize}
All covariates except for VIIRS reflectance channels are excluded. We decided to keep these variables because \cite{gerard2023wildfirespreadts} showed empirically that they are the most predictive, in terms of cross-validation error.

\paragraph{Rates model.} Local rates are computed via a neural network that takes as input the VIIRS channels $x_{1:7}\in \R^{64\times 64\times 3\times 7}$ for the entire week (i.e., we are performing prediction conditioning on covariates that occur at future time points) and the current one-hot encoded state $\mathbf{z}_t\in\{0,1\}^{64\times 64\times 3}$, concatenated to VIIRS inputs along the channel dimension after repeating 7 times along the time dimension. The model is a UTAE network \citep{garnot2021panoptic}, selected for its good performance in \cite{gerard2023wildfirespreadts}. This model resembles a UNET, with temporal self-attention \citep{garnot2021panoptic}. We set the number of channels at each layer to be $[64, 64, 64, 128]$ for upsampling steps and $[128,64,32,32,3]$ for downsampling steps, and the embedding dimension for self-attention to 128, for a total of $1,574,725$ parameters. We did not tune these hyperparameters. We let the output be a field in $\mathbb{R}^{64\times 64\times 3}$, and exponentiate to ensure positivity. Before turning these into outflow rates, we mask values so that transitions from unburned ($U$) to burned ($B$) have rates zero, and set the rates of transitioning to active ($A$) to zero for all pixels with no Moore neighbors being active. Finally, we fill the diagonal of the outflow rates with the negative sum of the off-diagonals. We let the initial distribution be a UTAE with channels $[32, 32, 64]$ for upsampling and $[64,32,32]$ for downsampling, conditioned on VIIRS channels.

\paragraph{Observations model.} Observations are binarized fields of active fire pixels, taken directly from \citep{gerard2023wildfirespreadts}. These are obtained once a day by aggregating two overpasses from the VIIRS satellite, hence are not exactly an instantaneous snapshot. For simplicity, we treat it as such and model the emission distribution using a simple scalar value: $p\!\left(y^i_{k}=1\mid Z^i_{\tau_k} =A\right)=\sigma(\theta_\text{detect})$, where $\theta_\text{detect}$ is a scalar logit and $\sigma$ the logistic function. We set $p\!\left(y^i_{k}=1\mid Z^i_{\tau_k} \in \{ U,B\}\right)=\delta$ with $\delta = 0.001$ for numerical stability. In other words, if a fire is active in a single pixel we have probability $\sigma(\theta_\text{detect})$ of observing it, and if a fire is not active we will most likely not detect it (probability of $1-\delta$ of the observed pixel being zero). This choice is scientifically motivated: pixels are likely to be obscured by clouds and smoke, and we assume contamination does not occur in the other direction (i.e. very few false negatives).
\begin{figure}[t]
\centering
\footnotesize
\captionof{table}{Parameter estimates and relative parameter error (RPE) for the SIRS model with 64 nodes, mean $\pm$ 2 standard deviations across 10 random seeds.}
\label{tab:final-param-estimates-64}
\begin{tabular}{lccccc}
 & $\boldsymbol{\alpha_0}$ & $\boldsymbol{\alpha_1}$ & $\boldsymbol{\beta}$ & $\boldsymbol{\gamma}$ & \textbf{RPE} \\
\hline \\
\textbf{Ground truth} & 0.1 & 1.0 & 0.4 & 0.05 & -- \\
TwistNet - KL & 0.117 $\pm$ 0.021 & \textbf{0.888} $\pm$ 0.115 & \textbf{0.396} $\pm$ 0.030 & 0.047 $\pm$ 0.007 & \textbf{0.388} $\pm$ 0.222 \\
TwistNet - DRE & 0.066 $\pm$ 0.022 & 0.842 $\pm$ 0.136 & 0.328 $\pm$ 0.039 &\textbf{ 0.049} $\pm$ 0.014 & 0.797 $\pm$ 0.319 \\
TAG - DRE & 0.067 $\pm$ 0.085 & 0.723 $\pm$ 0.271 & 0.307 $\pm$ 0.104 & 0.096 $\pm$ 0.130 & 1.891 $\pm$ 3.075 \\
NAS-X & \textbf{0.092} $\pm$ 0.024 & 0.777 $\pm$ 0.123 & 0.359 $\pm$ 0.044 & \textbf{0.049} $\pm$ 0.009 & 0.500 $\pm$ 0.293 \\
NASMC & 0.015 $\pm$ 0.017 & 0.640 $\pm$ 0.326 & 0.329 $\pm$ 0.105 & 0.035 $\pm$ 0.021 & 1.728 $\pm$ 0.981 \\
\end{tabular}

\vspace{2em}

\includegraphics[width=0.95\linewidth]{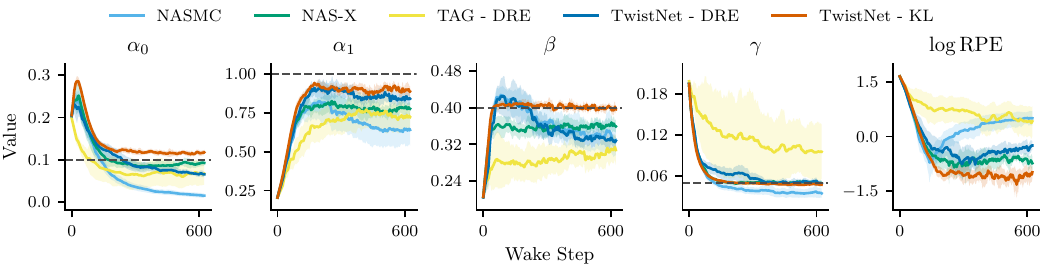}
\vspace{1em}

\captionof{figure}{ Evolution of the parameters and log relative parameter error ($\log\,$RPE) through wake steps updating the parameters $\theta$, for a SIRS model on a graph with 64 nodes.}
\label{fig:params-64}

\end{figure}
 
\paragraph{Twist model.} We use the same UTAE model for the context encoder of the twist and the score networks, with channel multiplicities of $[64, 64, 64, 128]$ during upsampling and $[128,64,32,32]$ during downsampling, and embedding dimension of 64 for self-attention. We let the context encoder depend on the set of weekly observations $y_{1:7}\in\{0,1\}^{64\times 64\times 1\times 7}$ and VIIRS covariates $x_{1:7}\in \R^{64\times 64\times 3\times 7}$, concatenated along the channel dimension. We also let the model depend on the scalar time $t$, by applying adaptive normalization layers \cite{perez2018film} to intermediate activations, following the implementation in \cite{peebles2023scalable}. 
The score network (used in NASMC and NAS-X) is identical, but it also takes as input the current one-hot encoded state $\mathbf{z}_t\in\{0,1\}^{64\times 64\times 3}$ by repeating it 7 times and concatenating it to the other inputs along channel dimensions.
For both the context encoder and the score we use an additional convolutional layer to map to the desired output channel dimension, corresponding to 3 for the score net and $3\times m$ for the TwistNet, where $m=256$. In the TwistNet, these embeddings are then passed through a two-layer MLP to produce the logarithm of twist values. We let the posterior initial distribution be a UTAE with channels $[32, 32, 64]$ for upsampling and $[64,32,32]$ for downsampling, conditioned on VIIRS channels and future observations. We also train an encoder conditioned only on VIIRS channels and the first observation.
NAS-X makes use of an additional model producing the twist value. We let this be identical to the score network, and perform mean pooling over the output to get a scalar value. This makes the number of parameters for NAS-X and TwistNet roughly equal: $2,845,961 =1,268,355\times2$ for the score and twist networks in NAS-X, and $2,808,420 = 1,709,664 + 789,505$ for the TwistNet, combining context encoder and aggregator.

\paragraph{Training.}
We follow the wake-sleep routine detailed in Algorithm~\ref{algo:wake_sleep_detailed}.
We start by training the $h^\psi$, the initial distribution, and the first-observation encoder using the Adam optimizer (learning rate $5\times 10^{-4}$, PyTorch defaults otherwise) with batch size $16$ for $150$ steps. We employ a Monte Carlo approximation of the time summation at each training step, and the time grid is built by sampling $\Delta_t$ uniformly from $\{0.02,\,0.05,\,0.1\}$. We found that including finer grids helps the twist model learn how to interpolate between observations where multiple transitions occur, which is typical in this dataset given the rapid spread of wildfires.
We then alternate blocks of wake updates for $\theta$ and sleep updates for $\psi$ for $G{=}15$ outer iterations. We amortize computation by performing 25 gradient updates per simulated batch. During wake steps, we use the Adam optimizer on $\theta$ with the same learning rate of $5\times 10^{-4}$, batch size of $8$, and $40$ updates per outer iteration. 
Each update uses tSMC with proposal $q^{\bar\theta,\psi}$, $M{=}5$ particles, systematic resampling at every step (ESS threshold $=1$), and $\Delta_t$ sampled uniformly from $\{0.05,\,0.1\}$. Because we resample at every step, we avoid weighting the wake loss across particles. Instead, we draw one path by importance resampling using the final normalized weights and evaluate the wake loss on that path.

\begin{figure*}[!t]
  \centering

  \begin{subfigure}[t]{0.48\textwidth}
    \includegraphics[width=\linewidth]{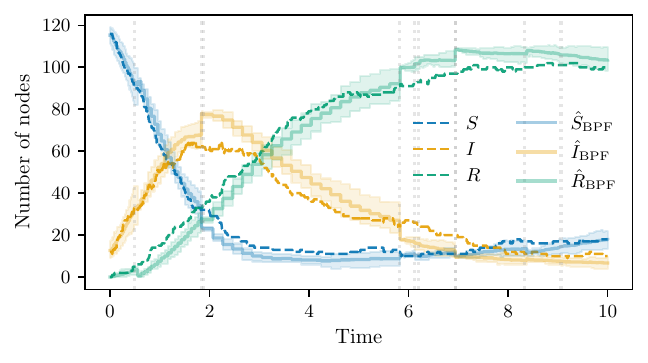}
  \end{subfigure}\hfill
  \begin{subfigure}[t]{0.48\textwidth}
    \includegraphics[width=\linewidth]{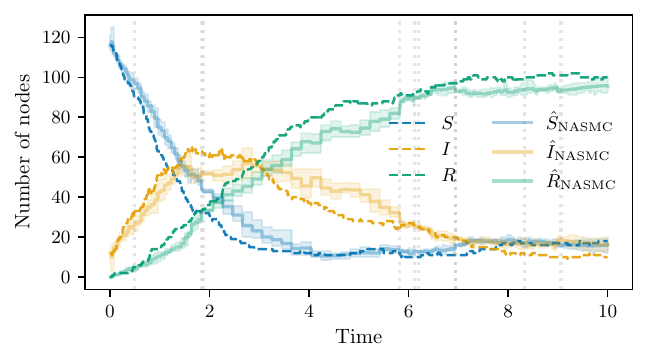}
  \end{subfigure}

  \medskip 

  \begin{subfigure}[t]{0.48\textwidth}
    \includegraphics[width=\linewidth]{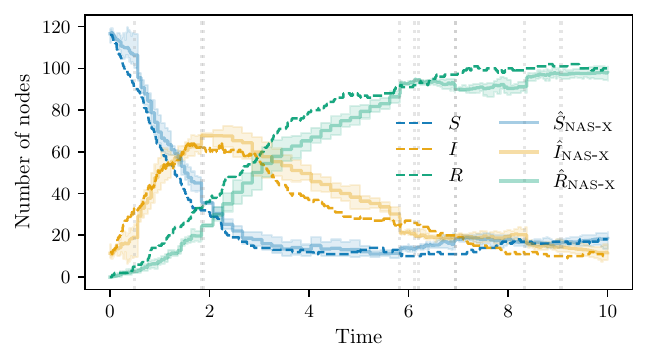}
  \end{subfigure}\hfill
  \begin{subfigure}[t]{0.48\textwidth}
    \includegraphics[width=\linewidth]{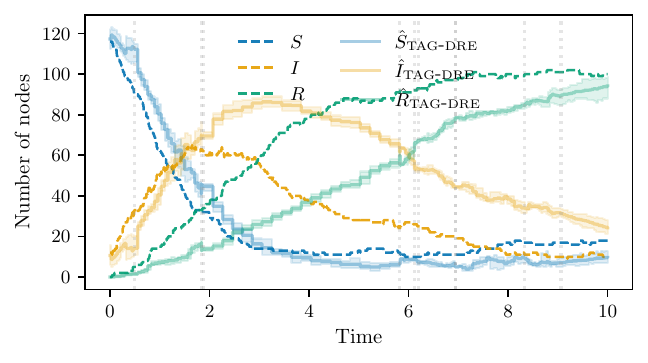}
  \end{subfigure}

  \medskip

  \begin{subfigure}[t]{0.48\textwidth}
    \includegraphics[width=\linewidth]{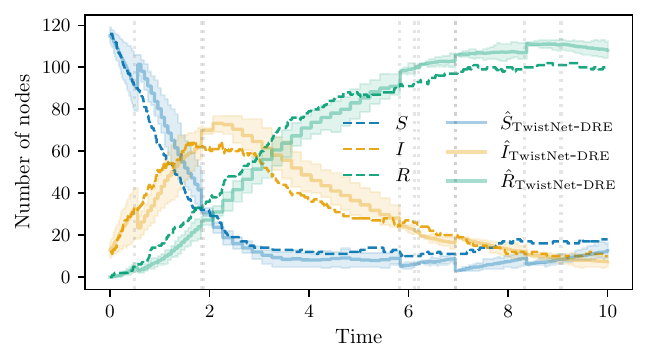}
  \end{subfigure}\hfill
  \begin{subfigure}[t]{0.48\textwidth}
    \includegraphics[width=\linewidth]{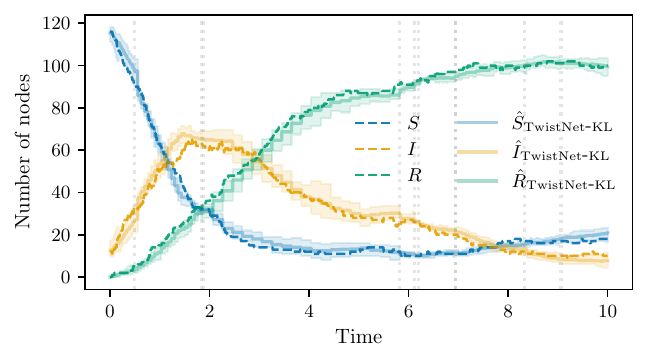}
  \end{subfigure}

  \caption{First example of latent trajectories, with counts of each state over the graph at each timestep. }
  \label{fig:example1}
\end{figure*}

\begin{figure*}[!t]
  \centering

  \begin{subfigure}[t]{0.48\textwidth}
    \includegraphics[width=\linewidth]{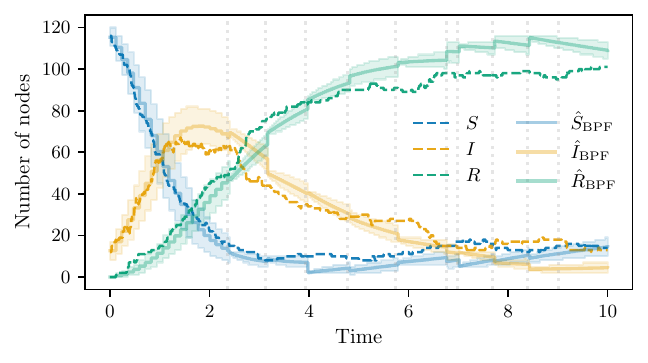}
  \end{subfigure}\hfill
  \begin{subfigure}[t]{0.48\textwidth}
    \includegraphics[width=\linewidth]{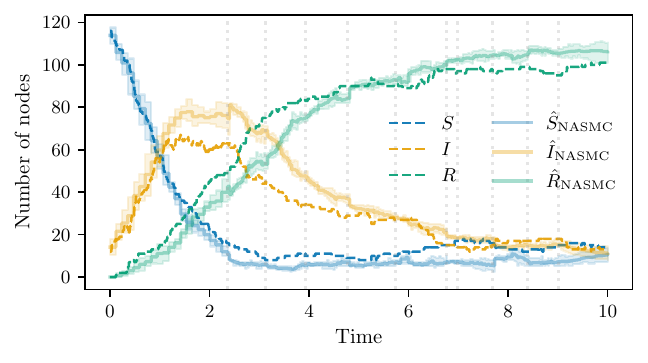}
  \end{subfigure}

  \medskip 
  
  \begin{subfigure}[t]{0.48\textwidth}
    \includegraphics[width=\linewidth]{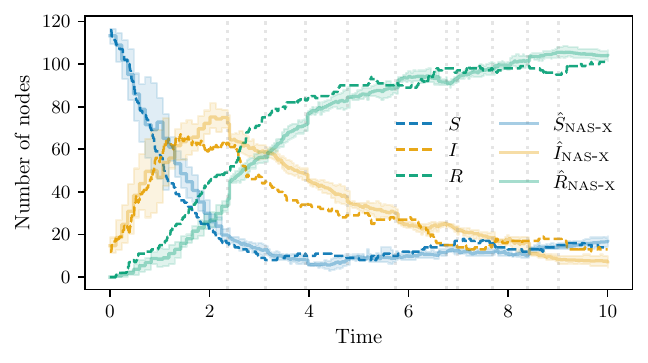}
  \end{subfigure}\hfill
  \begin{subfigure}[t]{0.48\textwidth}
    \includegraphics[width=\linewidth]{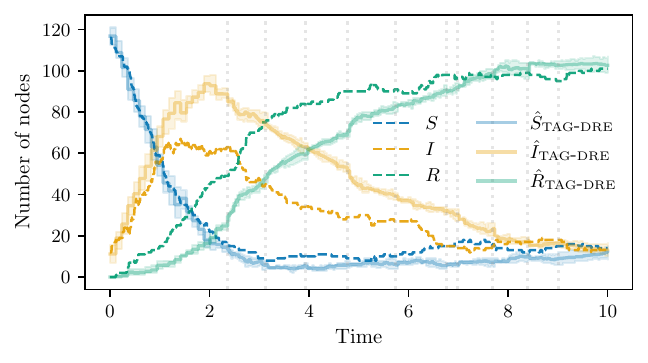}
  \end{subfigure}

  \medskip

  \begin{subfigure}[t]{0.48\textwidth}
    \includegraphics[width=\linewidth]{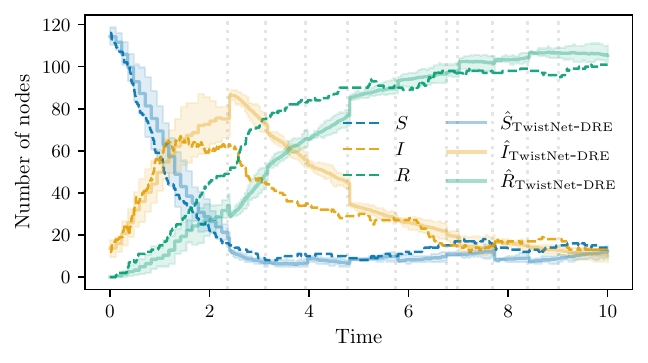}
  \end{subfigure}\hfill
  \begin{subfigure}[t]{0.48\textwidth}
    \includegraphics[width=\linewidth]{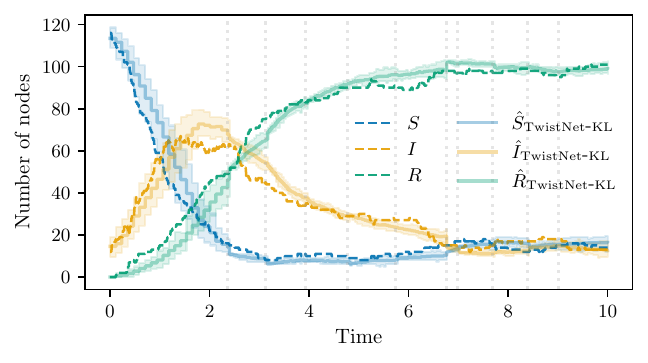}
  \end{subfigure}

  \caption{Second example of latent trajectories, with counts of each state over the graph at each timestep. }
  \label{fig:example2}
\end{figure*}

\paragraph{Evaluation.}
We consider two tasks: 
\begin{itemize}
    \item \emph{Reconstruction}: we generate trajectories conditioning on the full observation sequence $y_{1:7}$, and evaluate how well we recover the active fire areas. We run tSMC (Algorithm~\ref{algo:tsmc_short}) (or SMC for NASMC) with $S{=}16$ particles, $\Delta_t{=}0.02$, and resample at every step (uniform weights at the daily grid). We found that the performance of every method could be improved in this task by introducing a temperature parameter $\alpha$, and modifying both the twist and the score by raising them to the power of $\alpha$. To set the temperature, we grid-search $\alpha\in\{0.05,\,0.10,\,0.25,\,0.50,\,1.0\}$ on the first three training batches and fix the best $\alpha^\star$ for each method for evaluation. 
    \item \emph{Prediction}: after conditioning on the first observation, we run the prior model for $S=16$ samples per set of observations and see how well the resulting empirical distribution can predict future trajectories.
\end{itemize}
For both of these tasks, the metric we use is the binary cross-entropy of the empirical distribution from the particles against the observed active-fire map, averaging over pixels and days per trajectory. We report mean $\pm 2$ standard errors across 26 test trajectories in Table~\ref{tab:wildfire-results}. 

\begin{figure}[ht]
    \centering
    \includegraphics[width=0.95\linewidth]{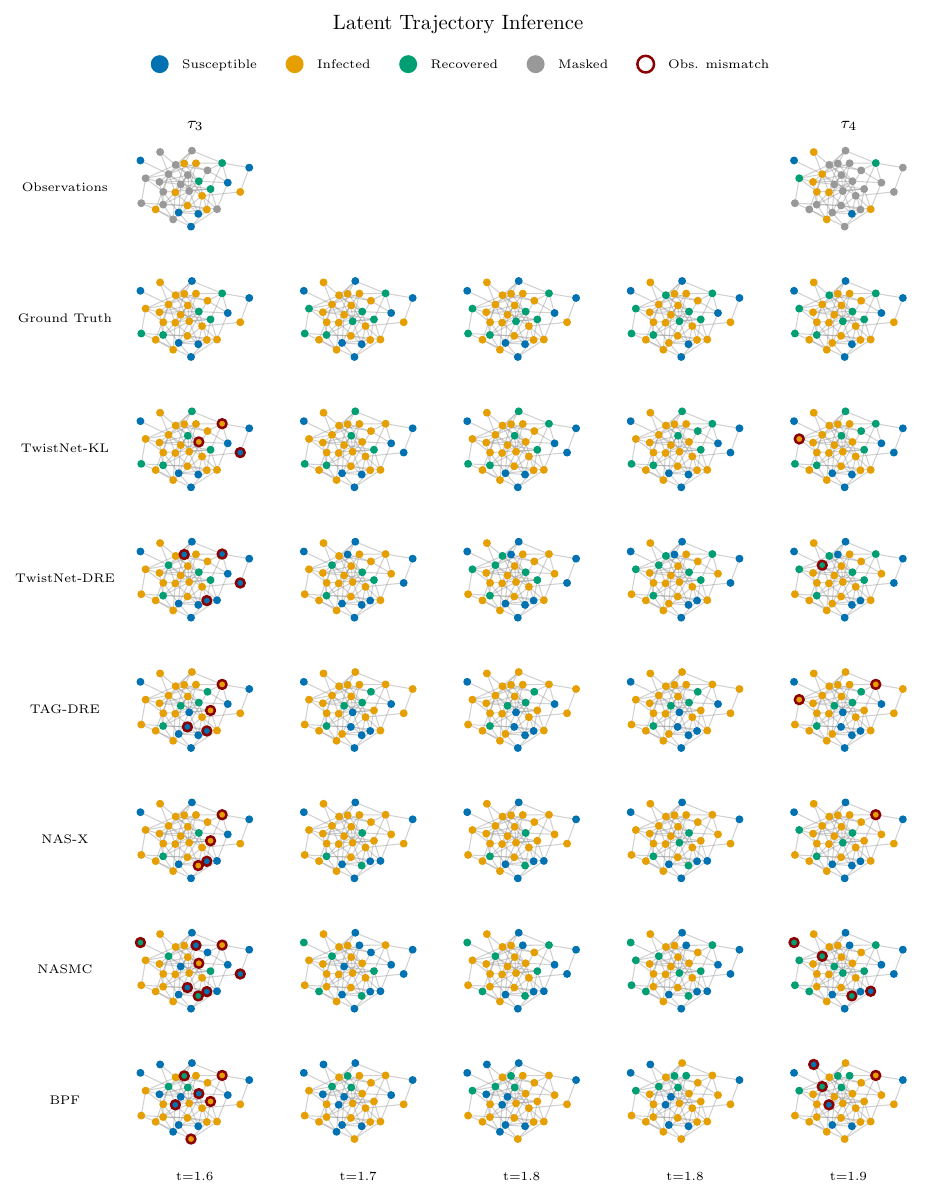}
    \caption{Samples in between observations at times $\tau_3=1.6$ and $\tau_4=1.9$, for a graph with 32 nodes. Mismatches between unmasked observations and samples are highlighted in red.}
    \label{fig:sirs32}
\end{figure}

\begin{figure}[ht]
    \centering
    \includegraphics[width=0.9\linewidth]{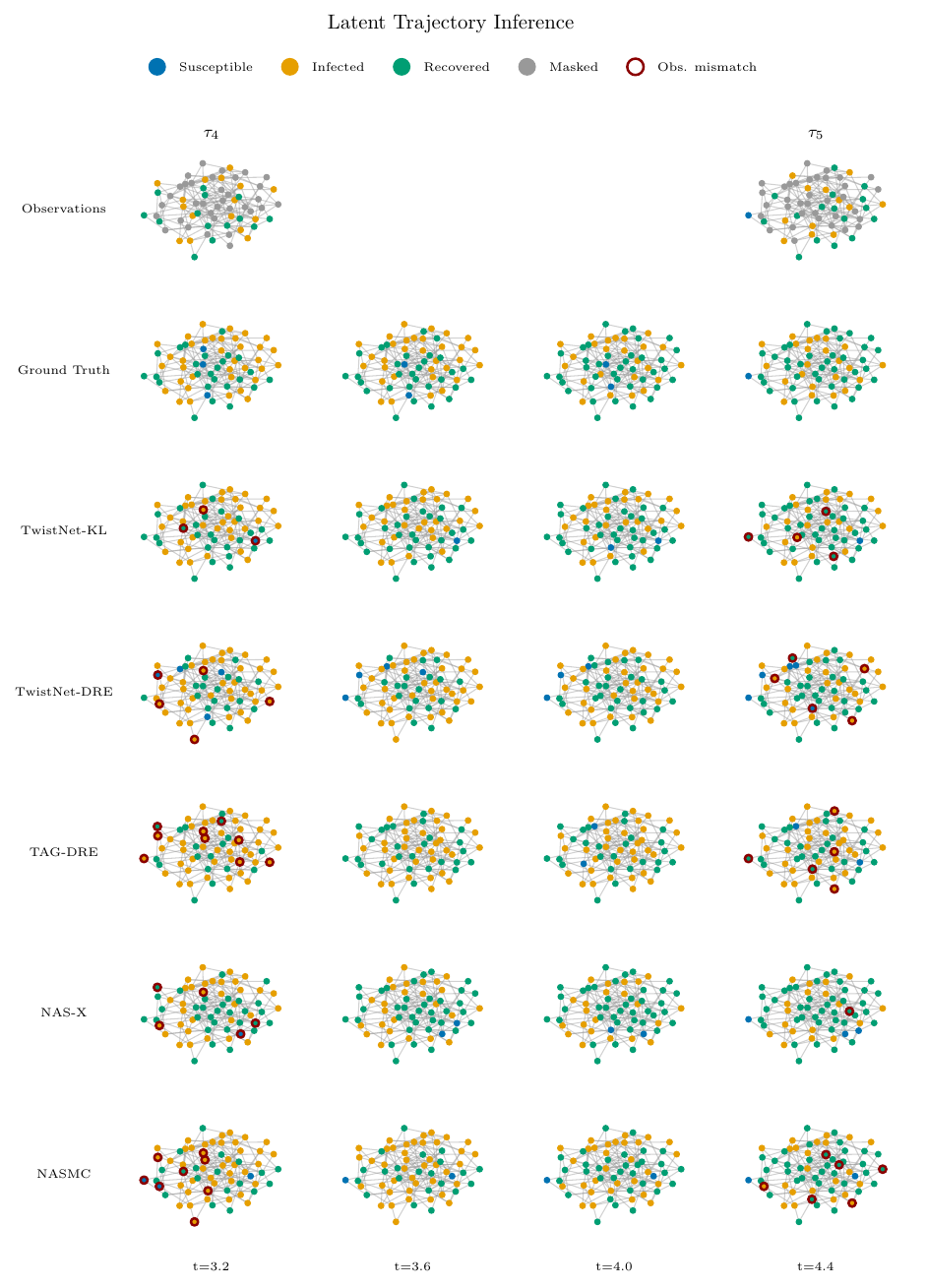}
    \caption{Samples in between observations at times $\tau_4=3.2$ and $\tau_5=4.4$, for a graph with 64 nodes. Mismatches between unmasked observations and samples are highlighted in red.}
    \label{fig:sirs64}
\end{figure}
\section{Additional Results}\label{appdx:add_results}
\subsection{SIRS model}\label{appdx:add_sirs}
We replicate the parameter learning experiment for graphs with $d=64$ nodes using 10 random seeds, and display the results in Table \ref{tab:final-param-estimates-64} and the values at each step in Figure \ref{fig:params-64}. By comparing it with the results for 32 nodes, we can notice that the gap between the TwistNet with KL loss and the other methods widens as dimensionality increases, in line with the experiment on latent trajectory inference.

In Figures \ref{fig:example1} and \ref{fig:example2}, we show a summary of generated latent trajectories for each method for different graphs with $d=128$ nodes, displaying a count of nodes in each state. In line with the evaluation in Figures \ref{fig:nll_vs_dim} and \ref{fig:brier}, the TwistNet-KL method seems to display much better performance than the alternatives in terms of closeness to the ground truth. In Figure \ref{fig:sirs32} and Figure \ref{fig:sirs64} we display a segment between two observations for graphs with 32 and 64 nodes respectively, and highlight the mismatch of the latent trajectories with the (non-masked) observations.

While twisted SMC methods shows huge improvements in performance over traditional schemes for high-dimensional problems, this comes at a cost: the need to perform a forward pass of the neural network at each timestep. This significantly increases the runtime of this family of methods, and this is possibly the biggest limitation of these methods when applied to large datasets.

\subsection{Wildfires trajectories}\label{appdx:add_wf}
In Figure \ref{fig:wf1} we show the VIIRS channels and empirical prior marginals (for $S=16$ samples and $\Delta_t=0.02$) for the trajectories displayed analyzed in Figure \ref{fig:wildfire}. Prior samples are obtained after encoding the first observation. We note that, other than TwistNet with a KL loss, none of the baselines seems to have learned a meaningful model for the prior dynamics, despite their losses having converged.

Additional examples of ground truth, inputs, prior predictions and posterior reconstructions on the test set are displayed in Figure \ref{fig:wf2} and Figure \ref{fig:wf3}, all using $S=16$ samples and $\Delta_t=0.02$. For the prior we display the empirical distribution of active fire, whereas for the posterior we display a single (non-cherry-picked) sample. A particularly challenging scenario is displayed in Figure \ref{fig:wf3} on the right: the fire has a single burning pixel at the start but develops into a large fire at days 6 and 7, and VIIRS channels are missing at two timesteps. TwistNet with a KL loss tries to smoothly interpolate, slightly accelerating between the fifth and the sixth day. All the other methods fail, and none of the predictions from the prior model seems to capture the possibility of a rapid spread. 

\begin{figure}[!t]
    \centering
    \includegraphics[width=0.99\linewidth]{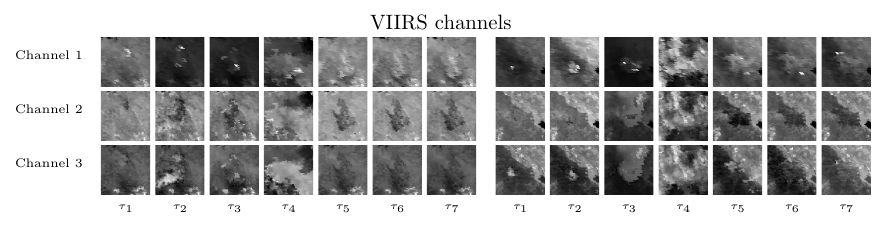}
    \includegraphics[width=0.99\linewidth]{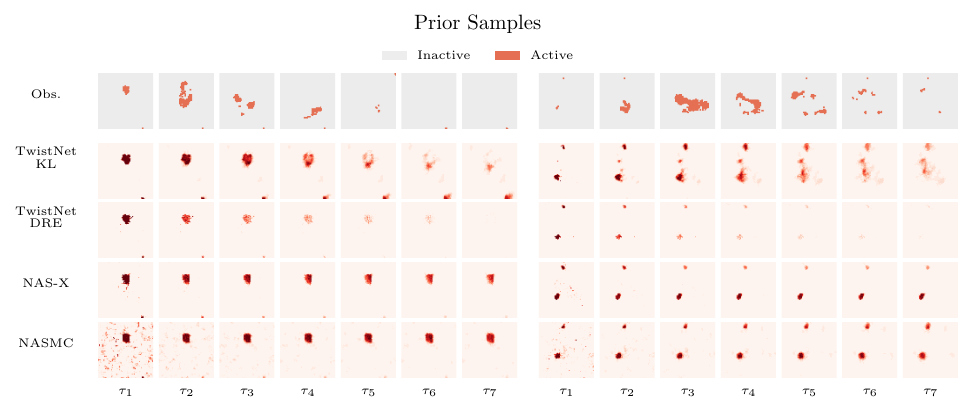}
    \caption{VIIRS covariates and empirical distribution of active fires obtained from samples of the prior dynamics, for the examples presented in Figure \ref{fig:wildfire}.}
    \label{fig:wf1}
\end{figure}

\begin{figure}[!t]
    \centering
    \includegraphics[width=0.99\linewidth]{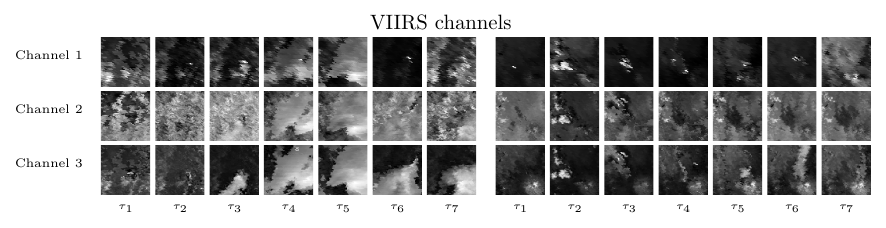}
    \includegraphics[width=0.99\linewidth]{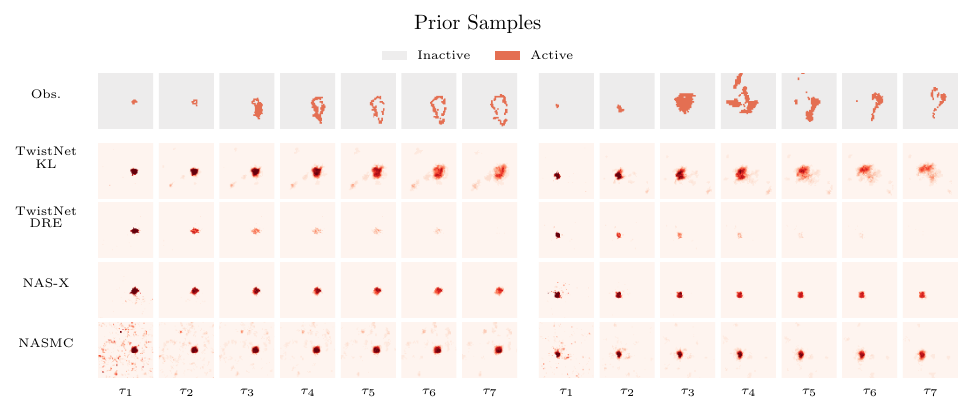}
    \includegraphics[width=0.99\linewidth]{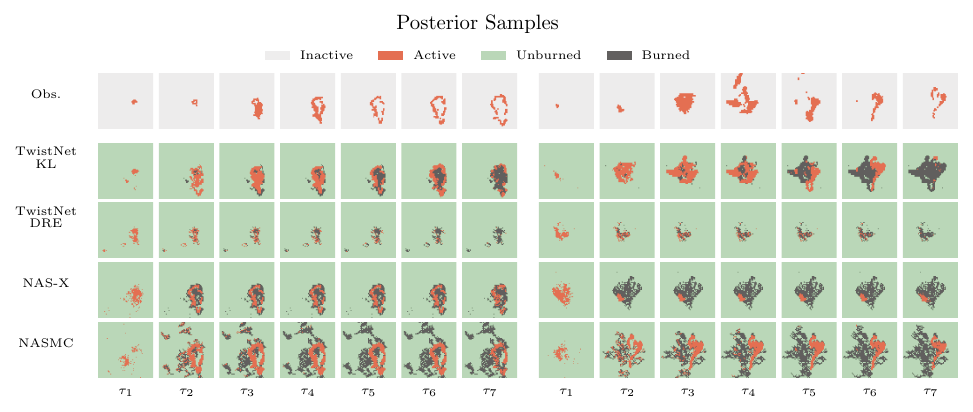}
    \caption{Third and fourth examples of wildfire trajectories.}
    \label{fig:wf2}
\end{figure}

\begin{figure}[!t]
    \centering
    \includegraphics[width=0.99\linewidth]{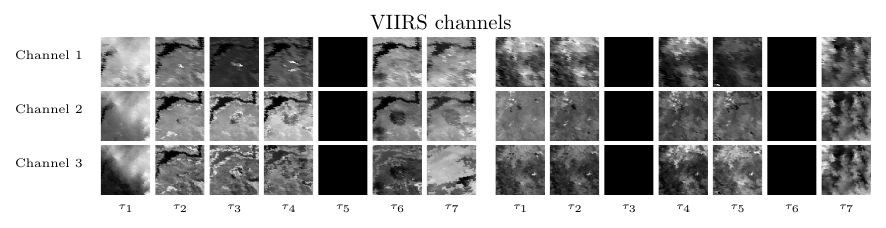}
    \includegraphics[width=0.99\linewidth]{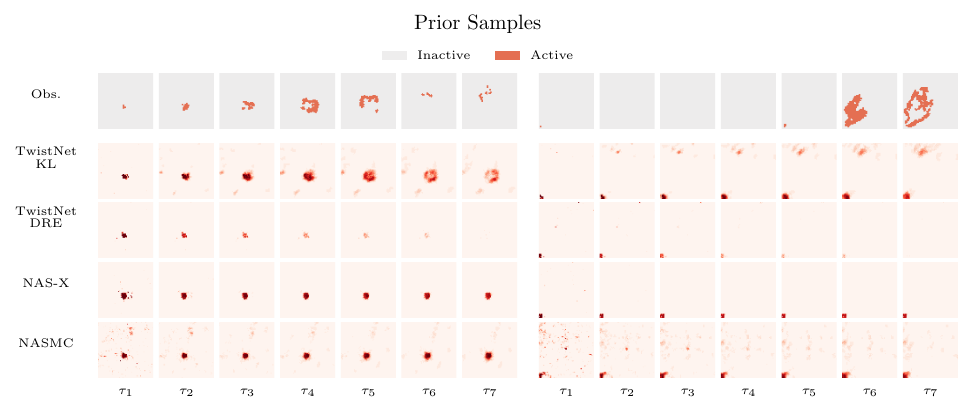}
    \includegraphics[width=0.99\linewidth]{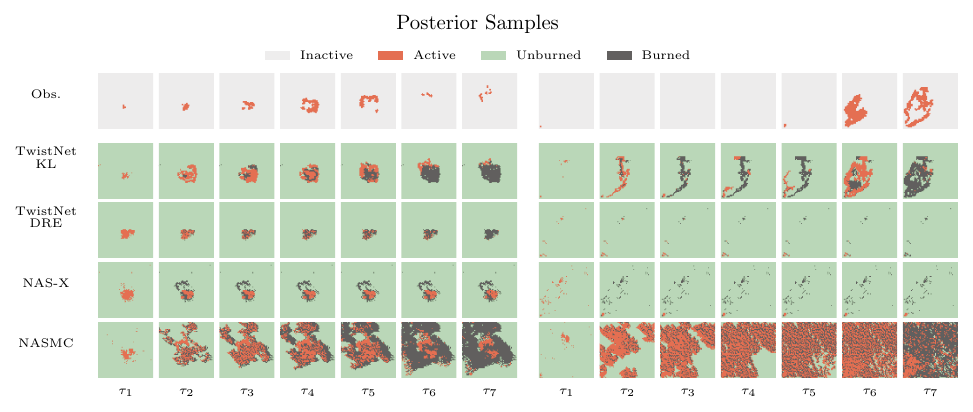}
    \caption{Fifth and sixth examples of wildfire trajectories.}
    \label{fig:wf3}
\end{figure}

\vfill

\end{document}